\documentclass[conference]{IEEEtran}
\IEEEoverridecommandlockouts
\usepackage{amsmath,amssymb,amsfonts, amsthm}
\DeclareMathOperator*{\argmin}{arg\,min}

\usepackage{algorithmic}
\usepackage[numbers]{natbib}\usepackage{graphicx}\usepackage{adjustbox}
\usepackage{textcomp}
\usepackage{url}
\usepackage{xcolor}
\usepackage{multirow}
\usepackage{booktabs}
\usepackage{placeins}
\usepackage{float}
\usepackage{caption}

\newtheorem{theorem}{Theorem}
\newtheorem{lemma}{Lemma}

\theoremstyle{remark}

\usepackage{enumitem}

\usepackage{tikz}
\usetikzlibrary{arrows.meta,positioning,fit,shapes.multipart}
\def\BibTeX{{\rm B\kern-.05em{\sc i\kern-.025em b}\kern-.08em
    T\kern-.1667em\lower.7ex\hbox{E}\kern-.125emX}}

\newcommand{\ali}[1]{}
\newcommand{\wzl}[1]{}
\newcommand{\AppendixTablesRef}{C}  
    
\begin{document}

\title{Transfer Learning for Meta-analysis Under Covariate Shift\thanks{Code available at: \protect\url{https://github.com/wangzilongri/SparseTLDRICHI2026}}}
\author{\IEEEauthorblockN{Zilong Wang}
\IEEEauthorblockA{\textit{Industrial and Systems Engineering} \\
\textit{Georgia Institute of Technology} \\
Atlanta, Georgia, USA \\
\IEEEauthorrefmark{1} zwang937@gatech.edu}
\and
\IEEEauthorblockN{Ali Abdeen}
\IEEEauthorblockA{\textit{Industrial and Systems Engineering} \\
\textit{Georgia Institute of Technology} \\
Atlanta, Georgia, USA \\
\IEEEauthorrefmark{2} aabdeen7@gatech.edu}
\and
\IEEEauthorblockN{Turgay Ayer}
\IEEEauthorblockA{\textit{Industrial and Systems Engineering} \\
\textit{Georgia Institute of Technology} \\
Atlanta, Georgia, USA \\
\IEEEauthorrefmark{3} ayer@isye.gatech.edu}
}




\maketitle

\begin{abstract}
Randomized controlled trials often do not represent the populations where decisions are made, and covariate shift across studies can invalidate standard IPD meta-analysis and transport estimators. We propose a placebo-anchored transport framework that treats source-trial outcomes as abundant proxy signals and target-trial placebo outcomes as scarce, high-fidelity gold labels to calibrate baseline risk. A low-complexity (sparse) correction anchors proxy outcome models to the target population, and the anchored models are embedded in a cross-fitted doubly robust learner, yielding a Neyman-orthogonal, target-site doubly robust estimator for patient-level heterogeneous treatment effects when target treated outcomes are available. We distinguish two regimes: in connected targets (with a treated arm), the method yields target-identified effect estimates; in disconnected targets (placebo-only), it reduces to a principled screen--then--transport procedure under explicit working-model transport assumptions. Experiments on synthetic data and a semi-synthetic IHDP benchmark evaluate pointwise CATE accuracy, ATE error, ranking quality for targeting, decision-theoretic policy regret, and calibration. Across connected settings, the proposed method is best or near-best and improves substantially over proxy-only, target-only, and transport baselines at small target sample sizes; in disconnected settings, it retains strong ranking performance for targeting while pointwise accuracy depends on the strength of the working transport condition. 
\end{abstract}

\begin{IEEEkeywords}
Transfer learning, meta-analysis, causal inference, clinical trial, doubly robust estimation.
\end{IEEEkeywords}
\section{Introduction}
Randomized controlled trials (RCTs) remain the gold standard for estimating treatment effects \citep{friedman2015fundamentals}, and drug approvals \citep{FDA1998}. 
Yet, their external validity or generalizability remains a major challenge,as trial participants often differ from the patient population of interest \citep{cartwright2007rcts}, thus decision-makers often need to know how a therapy would perform in a different target population than the one studied. 
This challenge arises frequently in oncology and other therapeutic areas where multiple trials are conducted in distinct populations, and where no single trial provides a direct comparison in the population of interest. \\

Network meta-analysis (NMA) has become the dominant framework for combining this evidence, allowing estimation of relative treatment effects across a connected network of trials \citep{lu2004combination,salanti2012indirect}.
When individual patient data (IPD) are available, IPD-NMA can incorporate treatment–covariate interactions to explore the heterogeneity of the effect and, in principle, estimate treatment effects in specific subpopulations \cite{riley2021individual}. 
However, standard IPD-NMA methods rely on two key conditions: Network connectivity – the target trial must be connected to the network through a shared comparator arm (e.g., placebo or standard of care). 
Shared-comparator comparability – the comparator arms across trials must be exchangeable after adjustment for measured. \cite{riley2023using}. \\ 
In practice, these conditions are often violated. 
Network connectivity is violated if target trials are disconnected, for example containing only a placebo arm for the treatment of interest. 
Baseline risks in shared comparators differ across studies due to covariate shift and unmeasured effect modifiers, violating shared-comparator comparability.\\
IPD-NMA typically models relative effects without anchoring to the observed absolute risk in the target population, leaving results vulnerable to bias when proportional hazards do not hold or when hazard ratios exhibit non-collapsibility. 
Furthermore, standard NMA outputs are network-wide averages; transporting them to a specific target cohort usually requires meta-regression or marginalization, not patient-level counterfactual prediction.

These limitations create a methodological gap: there is no established approach that can
(i) transport trial evidence to a target population lacking a concurrent treated arm,
(ii) correct for covariate shift in both measured covariates and residual baseline risk, and
(iii) support patient-level, target-specific decision analysis in disconnected trial settings. \\

We propose a placebo-anchored transport framework to address this gap.
Our method learns conditional outcome models from source-trial IPD (treatment and placebo),
explicitly calibrates baseline risk using observed placebo outcomes in the target trial,
and embeds the calibrated models in an orthogonal estimation pipeline.
This approach requires no network connectivity and directly adjusts for covariate shift through anchoring.
When treated outcomes are observed in the target (even if limited), it yields individualized
counterfactual predictions via a doubly robust learner; when treated outcomes are unavailable,
it provides calibrated baseline risk and a principled working-model extrapolation for
scenario and sensitivity analysis.

\section{Related Work}

IPD meta-analysis and transportability methods provide the main alternatives for combining trial evidence. Standard IPD-NMA \cite{riley2021individual,lu2004combination} requires network connectivity and shared-comparator comparability; when the target is disconnected (e.g., placebo-only) or under covariate shift, these conditions fail. Transportability and weighting approaches \cite{dahabreh2019generalizing,pearl2022external} formalize external validity via selection diagrams and reweighting (IPW \cite{horvitz1952generalization,rosenbaum1983central}, AIPW \cite{robins1995semiparametric}, entropy balancing \cite{hainmueller2012entropy}), but rely on overlap, correct propensity specification, and do not anchor to target placebo outcomes. None of these directly address calibration of proxy outcome models using scarce target placebo data in disconnected settings.
\begin{itemize}[noitemsep,topsep=2pt]
\item \textbf{IPD-NMA and transportability:} Network meta-analysis and transport estimators \cite{riley2021individual,dahabreh2019generalizing} assume connectivity or sufficient overlap; they do not use placebo-at-target as a gold calibration signal.
\item \textbf{Proxy--gold and transfer learning:} High-dimensional transfer learning \cite{bastani2021predicting,tian2023transfer} treats abundant source data as proxy and scarce target data as gold; we adopt this view and anchor to target placebo.
\item \textbf{Orthogonal and DR learners:} Doubly robust and DML estimators \cite{chernozhukov2018double,kennedy2024semiparametric} give Neyman-orthogonal CATE estimation; we combine placebo-anchored outcome models with cross-fitted DR learning.
\item \textbf{Other approaches:} Classical parametric IPD meta-analysis, ML (e.g., causal forests \cite{athey2019generalized}, BART\cite{Chipman_2010}), and TMLE \cite{van2011targeted} improve flexibility but typically assume connectivity or lack explicit placebo anchoring; deep causal methods exist but share similar limitations.
\end{itemize}
We therefore propose a placebo-anchored transport framework that calibrates proxy models to target placebo outcomes and embeds them in an orthogonal DR pipeline, without requiring network connectivity.

\subsection{Our Contributions}

This paper develops a calibrated framework for transporting individualized treatment effect estimates across heterogeneous randomized controlled trials under covariate shift. We formulate the transport problem as a \emph{proxy--gold} learning task in the sense of high-dimensional transfer learning, where outcomes from source trials provide abundant but potentially miscalibrated proxy signals, and placebo outcomes observed at the target site provide a limited but informative calibration signal for baseline risk. This formulation does not claim causal identification of target treatment effects from target data alone; instead, it defines a stable working-model estimand suitable for disconnected or weakly connected trial settings.

Within this framework, we introduce a placebo-anchored correction mechanism that adjusts proxy outcome regressions using a restricted, low-complexity function class. The resulting anchoring step explicitly targets systematic baseline miscalibration induced by population heterogeneity and yields calibrated outcome predictions in the target population. The low-complexity restriction is imposed as a regularity condition rather than an identification assumption, allowing estimation error to be decomposed into stochastic error and an explicit structural approximation term.

We further embed the anchoring procedure in a doubly robust learner with cross-fitting, exploiting the fact that treatment assignment is randomized and propensities are known by design in randomized trials. In connected targets, where both treatment arms are observed at the target site, the resulting estimator is Neyman-orthogonal and admits a pointwise asymptotic linear expansion for the identified target-site CATE in connected targets. Source data contribute by improving the first-stage nuisance rates through transfer learning, while finite-sample performance depends jointly on transferable source sample sizes, the target calibration sample size, and approximation error under the working model. In disconnected targets, where treated outcomes are unavailable at the target, we instead derive a transport error decomposition around an oracle transported target, separating estimation error on selected sources, structural transport bias, and placebo-screening error. Proofs and detailed rate and error statements appear in the appendix (Section~\ref{sec:appendix_proof}).

Finally, through synthetic and semi-synthetic experiments, including ablations and robustness checks, we empirically characterize the operating regimes of placebo anchoring and illustrate how performance degrades gracefully toward proxy-only baselines as structural assumptions are weakened.

\subsection{Organization of this Paper}
The remainder of the paper proceeds as follows. 
\S~\ref{sec:P3_Preliminaries} formalizes the problem setting, data structure, and identification assumptions. 
\S~\ref{sec:P3_Proposed_Framework} presents the placebo-anchored proxy–gold estimator and the DR-Learner integration in detail. 
\S~\ref{sec:P3_experiments} evaluates the approach on synthetic ablations, along with robustness checks when some identification assumptions are violated. 
\S~\ref{sec:P3_ihdp} evaluates the approach on a semi-synthetic benchmark derived from the IHDP dataset.
Finally, \S~\ref{sec:P3_Conclusion} summarizes findings, discusses limitations, and outlines directions for future work.

\section{Preliminaries}\label{sec:P3_Preliminaries}
Here we introduce our preliminaries and notation for this paper.

\subsection{Problem Statement}
We study the setting where a therapy is evaluated across multiple RCTs, each conducted at a different site or in a different patient population. Within every site, both treatment and placebo arms are observed under a harmonized protocol (identical dosage, randomization scheme, and eligibility criteria). 
Consequently, treatment assignment is randomized and the propensity model is known by design. 

The methodological challenge arises because the covariate distribution of enrolled patients drifts across sites, reflecting population heterogeneity (e.g., the same oncology drug trialed in the U.S. versus Norway). 
Such covariate shift induces systematic differences in baseline risk and treatment–covariate interactions, leading to biased estimates if outcomes are naively pooled or if network connectivity assumptions are imposed. 

Our goal is to estimate patient-level counterfactual outcomes and treatment effects in a target site by leveraging information from source trials while anchoring estimates to the observed placebo outcomes in the target. 
Following \cite{bastani2021predicting}'s proxy–gold paradigm, we treat source-site outcomes as \texttt{proxy} labels and the target-site placebo outcomes as scarce but high-fidelity \texttt{gold} labels. 
This enables calibrated transport of treatment effects across heterogeneous populations without requiring network connectivity.

\subsection{Data and Proxy--Gold Setup}\label{sec:data_proxy_gold}
We observe multiple randomized controlled trials (RCTs) indexed by $c\in\{1,2,\dotsc,C\}$ (\texttt{sources}) together with one \texttt{target} trial, indexed by $c=0$. In each trial $c$, let $X\in\mathbb{R}^p$ denote baseline covariates, $A\in\{0,1\}$ the randomized treatment assignment, and $Y$ the outcome. Because treatment is randomized within each trial, the propensity $e_c(x)=\mathbb{P}(A=1\mid X=x,c)$ is known by design.

\textbf{Proxy data (abundant).}  
From the source trials we have individual participant data (IPD) for both treatment and placebo arms,
\[
\mathcal{D}_\mathrm{proxy} \;=\;\bigcup_{c=1}^C \{(X_i^{(c)},A_i^{(c)},Y_i^{(c)})\}_{i=1}^{n_c}.
\]
These data allow us to fit rich conditional outcome models. 
However, due to population heterogeneity and covariate shift across sites, such models may be biased when applied directly to the target trial.

\textbf{Gold data (scarce but high-fidelity).}  
In the target trial $c=0$, both treatment and placebo arms exist, but the placebo outcomes play a special role: they directly reveal the target population’s baseline risk distribution. We denote
\[
\mathcal{D}_\mathrm{gold} \;=\;\{(X_j^{(0)},A_j^{(0)}=0,Y_j^{(0)})\}_{j=1}^{m}.
\]
$\mathcal{D}_\mathrm{proxy}$ provides a low-variance but potentially biased signal, whereas $\mathcal{D}_\mathrm{gold}$ provides a scarce but unbiased calibration signal. 
We therefore cast the target placebo outcomes as ``gold" labels that anchor and debias outcome models trained on the source trials.

\subsection{Identification and Regularity Assumptions}

We distinguish carefully between \emph{design-based identification} and
\emph{model-based regularity conditions}. Our approach does \emph{not} assume
classical cross-site exchangeability, nor does it claim nonparametric identification
of the true target-site treatment effect. Instead, we define and estimate a
\emph{working-model CATE} whose interpretation and stability rely on explicit
structural regularity conditions.

\begin{enumerate}[label=\textbf{A\arabic*},ref=\textbf{A\arabic*},leftmargin=*]

\item \label{as:A1} \textbf{Consistency.}  
For each individual,
\begin{align*}
Y = Y(1)\,\mathbb{I}(A=1) + Y(0)\,\mathbb{I}(A=0).
\end{align*}

\item \label{as:A2} \textbf{Randomization within sites.}  
Within each trial $c$, treatment assignment is randomized:
\begin{align*}
(Y(0),Y(1)) \perp A \mid X,\, c.
\end{align*}
Consequently, the propensity score $e_c(x)=\mathbb{P}(A=1\mid X=x,c)$ is known by design.

\item \label{as:A3} \textbf{Positivity (overlap).}
There exists $\epsilon>0$ such that for all sites $c$ and all $x$ in the support,
\begin{align*}
\epsilon \le e_c(x) \le 1-\epsilon.
\end{align*}
This is a design property of randomized controlled trials and applies only to
baseline covariates measured prior to treatment assignment.

\item \label{as:A4} \textbf{Outcome-regression stability (regularity).}  
For each arm $a\in\{0,1\}$ and site $c$, the conditional outcome regression
\begin{align*}
\mu_{a,c}(x) := \mathbb{E}[Y \mid A=a, X=x, c]
\end{align*}
is Lipschitz continuous in $x$ with a site-uniform constant $L<\infty$:
\begin{align*}
\big| \mu_{a,c}(x) - \mu_{a,c}(x') \big| \le L \|x-x'\|,
\quad \forall x,x'.
\end{align*}
This assumption is a \emph{regularity condition} ensuring that moderate covariate
shift does not induce arbitrarily large changes in predicted outcomes. It does
\emph{not} imply exchangeability or transportability across sites.

\item \label{as:A5} \textbf{Low-complexity transport bias (approximation condition).}  
Differences between site-specific outcome regressions and proxy regressions are
structured and of low complexity. Specifically, for each arm $a$ and site $c$,
\begin{align*}
\mu_{a,c}(x) = \mu^{\mathrm{proxy}}_a(x) + \delta_{a,c}(x),
\end{align*}
where $\delta_{a,c}$ belongs to a restricted function class $\mathcal{D}$
(e.g., sparse linear functions $\delta(x)=\beta^\top x$ with
$\|\beta\|_0 \le s \ll p$).

This assumption is \emph{not} an identification assumption. It defines a
working approximation class that enables stable estimation and calibrated
extrapolation. When A5 holds approximately, estimation error decomposes into
stochastic error and approximation error; when violated, performance degrades
smoothly toward proxy-only baselines.

\item \label{as:A6} \textbf{Screening-valid transportability under placebo-based source selection (working-model restriction).}  
In disconnected targets (or whenever treated outcomes in the target are unavailable or not used), our \texttt{Proposed-B} procedure selects a subset of source trials using \emph{placebo-arm} compatibility with the target, and then learns a DR CATE model on the selected sources which is transported to the target by prediction. Accordingly, we assume the placebo-based screening step is informative for CATE transportability in the following sense.

There exists a (possibly unknown) subset of sources $\mathcal{C}_0^\star\subseteq\{1,\dots,C\}$ and constants $(\epsilon_0,\epsilon_\tau)\ge 0$ such that:
\begin{enumerate}[label=(\alph*),leftmargin=*]
\item \textbf{Placebo compatibility on the target support.} For all $c\in\mathcal{C}_0^\star$,
\[
\big\|\mu_{0,c}-\mu_{0,0}\big\|_{L_2(P_0)} \;\le\; \epsilon_0,
\]
where $P_0$ denotes the distribution of $X\mid S=0$ and $\mu_{a,c}(x)=\mathbb{E}[Y\mid A=a,X=x,S=c]$.

\item \textbf{CATE proximity on the target support.} For all $c\in\mathcal{C}_0^\star$,
\begin{align*}
    \big\|\tau_c-\tau_0\big\|_{L_2(P_0)} \;&\le\; \epsilon_\tau\\
\tau_c(x)&:=\mu_{1,c}(x)-\mu_{0,c}(x)\\ \tau_0(x)&:=\mu_{1,0}(x)-\mu_{0,0}(x).
\end{align*}

\end{enumerate}
Moreover, the placebo-based detection step applied to the target placebo sample
$\mathcal{D}_{0,0}=\{(X_i,Y_i):S_i=0,A_i=0\}$ identifies (up to small error) a subset contained in $\mathcal{C}_0^\star$; i.e., with probability at least $1-\eta$,
\[
\widehat{\mathcal{C}}_0 \subseteq \mathcal{C}_0^\star.
\]

\noindent\textbf{Interpretation.}
Assumption~\ref{as:A6} is \emph{not} a design-based identification condition. It defines the \emph{working transport regime} in which placebo-based screening yields a set of sources whose CATE functions are (approximately) transportable to the target on the target covariate support. The approximation level $\epsilon_\tau$ explicitly captures irreducible cross-arm non-transfer not detectable from placebo outcomes alone; when $\epsilon_\tau$ is non-negligible, \texttt{Proposed-B} targets a stable working-model limit rather than the true $\tau_0$.

\end{enumerate}

\noindent\textbf{Interpretation.}
Assumptions~\ref{as:A1}--\ref{as:A3} are standard design-based conditions for randomized trials.
Assumptions~\ref{as:A4}--\ref{as:A6} are regularity and approximation conditions that define a
well-posed working estimand rather than a nonparametrically identified causal
effect.
Throughout, we emphasize calibrated estimation under acknowledged
transport bias rather than causal identification under strong exchangeability
assumptions.

\section{Proposed Framework}\label{sec:P3_Proposed_Framework}

A central premise of our framework is that every trial is randomized, so the true propensities $e_c(x)$ are known by design. This ensures that any doubly robust estimator built on top of site-specific outcome models is consistent as long as the outcome regressions are well calibrated. 
The key methodological challenge is therefore not treatment assignment, but \textit{covariate shift across sites}: outcome models trained on sources may be systematically biased when applied to the target population.
We address this challenge by treating source trial outcomes as \texttt{proxy} labels and target placebo outcomes as scarce but high-fidelity \texttt{gold} labels for calibration. 
This can be seen as a combination of \cite{bastani2021predicting}'s multi-task transfer learner for data-fusion and \cite{10.1214/23-EJS2157}'s Doubly Robust (DR)-Learner

\subsection{Proposed Estimator}

We build on the glmtrans framework \citep{tian2023transfer} for transfer learning under high-dimensional GLMs.
Our approach applies glmtrans separately to each treatment arm, yielding target-anchored outcome models that are then combined into CATE estimates.
We describe three variants: a plug-in estimator (\texttt{Proposed}), a doubly robust variant with cross-fitting (\texttt{Proposed-CF}), and an extension for disconnected targets where no treated outcomes are observed (\texttt{Proposed-B}).

\subsubsection{Stage 1: Arm-Specific Transfer Learning via glmtrans}

For each treatment arm $a \in \{0,1\}$, we estimate a target-specific outcome regression $\widehat{\mu}_a(x)$ using the two-step glmtrans algorithm with automatic source detection.
Let $\mathcal{D}_{a,0} = \{(X_i, Y_i): c_i=0, A_i=a\}$ denote the target arm-$a$ data and $\mathcal{D}_{a,c}$ the corresponding source data for site $c \ge 1$.

\smallskip
\noindent\textit{Source Detection (Trans-GLM).}
We first identify which sources are statistically compatible with the target using cross-validation.
Partition the target data into three folds $\{\mathcal{D}_{a,0}^{[r]}\}_{r=1}^3$.
For each source $c$ and each hold-out fold $r$, compute the cross-validated prediction loss:
\begin{itemize}
\item $\widehat{L}_0^{(c)[r]}$: loss on fold $r$ after pooling source $c$ with folds $\{1,2,3\}\setminus\{r\}$
\item $\widehat{L}_0^{(0)[r]}$: loss on fold $r$ using only target folds $\{1,2,3\}\setminus\{r\}$ (Lasso baseline)
\end{itemize}
Average across folds: $\widehat{L}_0^{(c)} = \frac{1}{3}\sum_{r=1}^3 \widehat{L}_0^{(c)[r]}$.
Source $c$ is deemed \emph{transferable} if $\widehat{L}_0^{(c)} - \widehat{L}_0^{(0)} \le C_0 \cdot \widehat{\sigma}$,
where $\widehat{\sigma}$ is the empirical standard deviation of the baseline losses and $C_0 > 0$ is a threshold constant (default $C_0=2$).
Let $\widehat{\mathcal{A}}_a \subseteq \{1,\ldots,C\}$ denote the selected transferable sources for arm $a$.

\smallskip
\noindent\textit{Two-Step Transfer ($\widehat{\mathcal{A}}$-Trans-GLM).}
Given the selected sources $\widehat{\mathcal{A}}_a$, we apply the two-step transfer algorithm:
\begin{enumerate}
\item \textbf{Transferring step:} Pool selected sources with target and fit an $\ell_1$-penalized GLM:
\[
\widehat{w}^{\widehat{\mathcal{A}}_a} \in \argmin_{w} \bigg\{ \sum_{c \in \{0\} \cup \widehat{\mathcal{A}}_a} \sum_{i \in \mathcal{D}_{a,c}} (Y_i - X_i^\top w)^2 + \lambda_w \|w\|_1 \bigg\}.
\]
This pooled estimator has low variance but may be biased for the target due to cross-site heterogeneity.

\item \textbf{Debiasing step:} Correct for target-specific bias using only target data:
\[
\widehat{\delta}_a \in \argmin_{\delta} \bigg\{ \sum_{i \in \mathcal{D}_{a,0}} \big(Y_i - X_i^\top (\widehat{w}^{\widehat{\mathcal{A}}_a} + \delta)\big)^2 + \lambda_\delta \|\delta\|_1 \bigg\}.
\]
\end{enumerate}
The final target-anchored estimate is $\widehat{\beta}_a = \widehat{w}^{\widehat{\mathcal{A}}_a} + \widehat{\delta}_a$, yielding
$\widehat{\mu}_a(x) := x^\top \widehat{\beta}_a$.
In our implementation we use the \texttt{glmtrans} R package \citep{tian2023transfer} with \texttt{glmnet}: covariates are standardized (zero mean, unit variance) before the $\ell_1$-penalized fits so the penalty acts uniformly; coefficients are back-transformed to the original scale when reporting. The regularization path is computed on a logarithmic grid (as in \texttt{glmnet}), and $\lambda_w$, $\lambda_\delta$ are chosen by $K$-fold cross-validation minimizing mean squared error (\texttt{lambda.min}).

\smallskip
\noindent\textbf{Remark.}
The debiasing step is equivalent to \cite{bastani2021predicting}'s proxy--gold correction:
the pooled fit $\widehat{w}^{\widehat{\mathcal{A}}_a}$ serves as a variance-reduced ``proxy'' while the target-only correction $\widehat{\delta}_a$ anchors the estimate to ``gold'' target data.
When no sources pass detection, the algorithm reduces to target-only Lasso, automatically guarding against negative transfer.

\subsubsection{Stage 2: CATE Estimation}

\noindent\textbf{Option A (\texttt{Proposed}): Plug-in CATE.}
When the target contains both placebo ($m_0 > 0$) and treated ($m_1 > 0$) observations, we apply glmtrans separately to each arm and compute the plug-in CATE:
\[
\widehat{\tau}(x) := \widehat{\mu}_1(x) - \widehat{\mu}_0(x).
\]

\noindent\textbf{Option A with DR (\texttt{Proposed-CF}): Cross-Fitted Doubly Robust.}
To reduce sensitivity to outcome model misspecification, we construct doubly robust pseudo-outcomes on the target data.
Using $K$-fold cross-fitting (typically $K=2$), for each fold $k$:
\begin{enumerate}
\item Fit $\widehat{\mu}_0^{(-k)}, \widehat{\mu}_1^{(-k)}$ on out-of-fold data
\item Compute pseudo-outcomes on fold $k$:
\begin{align}
\psi_i &:= \widehat{\mu}_1^{(-k)}(X_i) - \widehat{\mu}_0^{(-k)}(X_i) \nonumber\\
&\quad + \frac{A_i - e(X_i)}{e(X_i)(1-e(X_i))} \Big(Y_i - \widehat{\mu}_{A_i}^{(-k)}(X_i)\Big),
\label{eq:P3_DR_pseudo}
\end{align}
where $e(X_i)$ is the known propensity score.
\end{enumerate}
The final DR-CATE is obtained by regressing $\psi_i$ on $X_i$ via $\ell_1$-penalized regression:
\[
\widehat{\tau}_{\mathrm{DR}}(\cdot) \approx \mathbb{E}[\psi \mid X = \cdot].
\]
In the connected-target regime, under cross-fitting, bounded propensities, and \(L_2(P_0)\)-consistent arm-specific nuisance regressions, this estimator admits a pointwise asymptotic linear expansion with \(n_0^{-1/2}\) rate at fixed covariate values. Orthogonality and cross-fitting remove first-order sensitivity to nuisance estimation, so the nuisance-generated term is asymptotically negligible under these conditions. Formal statements and proofs are given in Appendix~\ref{sec:appendix_proof}.\\

\smallskip
\noindent\textbf{Option B (\texttt{Proposed-B}): Disconnected Targets.}
When the target site has no treated outcomes ($m_1 = 0$), we cannot apply glmtrans to the treated arm in the target.
Instead, we use target placebo data for source detection and learn the CATE model entirely from selected sources:
\begin{enumerate}
\item \textbf{Source detection:} Apply Trans-GLM using only target placebo outcomes $\{(X_i, Y_i): c_i=0, A_i=0\}$ to identify transferable sources $\widehat{\mathcal{A}}$.

\item \textbf{Source-DR learning:} On the selected source data, fit outcome models $\widehat{\mu}_0^{\mathrm{src}}, \widehat{\mu}_1^{\mathrm{src}}$ via Lasso, compute DR pseudo-outcomes using~\eqref{eq:P3_DR_pseudo}, and fit a CATE model $\widehat{\tau}^{\mathrm{src}}(x)$ on the source pseudo-outcomes.

\item \textbf{Transport:} Apply the source-learned CATE directly to the target: $\widehat{\tau}_{\mathrm{target}}(x) := \widehat{\tau}^{\mathrm{src}}(x)$.
\end{enumerate}
This approach targets a transported working-model limit rather than a target-identified CATE. Its error is governed by three components: estimation error on the selected sources, a structural transport term reflecting mismatch between selected-source and target CATEs on the target covariate support, and a screening error term arising from placebo-based source selection; see Appendix~\ref{sec:appendix_proof}.

\section{Experiments}\label{sec:P3_experiments}

We evaluate the framework through synthetic ablations and robustness checks, addressing (i) contribution of anchoring and orthogonalization, (ii) scaling with target size, source availability, and dimension, and (iii) degradation under Assumption~\ref{as:A5} violations. We compare against the following baselines:
\begin{itemize}
    \item \texttt{TargetOnly}: Doubly robust learner \citep{kennedy2024semiparametric} using only target data (no transfer);
    \item \texttt{ProxyOnly}: Pooled source outcome models applied directly to target (no anchoring);
    \item \texttt{IPW-Transport}, \texttt{OM-Transport}: Standard transportability estimators \citep{dahabreh2019generalizing};
    \item \texttt{EntropyBal}: Entropy balancing for covariate shift adjustment \citep{hainmueller2012entropy};
    \item \texttt{AnchorOnly}: Placebo-anchored with DR orthogonalization;
    \item \texttt{Proposed}: Our method based on $\ell_1$-penalized transfer learning \citep{tian2023transfer} 
          with automatic source selection. Variants include \texttt{Proposed-CF} (cross-fitted) 
          and \texttt{Proposed-B} (Option~B for disconnected targets with $m_1=0$).
\end{itemize}

All experiments use $R=100$ Monte Carlo replicates per scenario.
Ground-truth potential outcomes are known in all synthetic settings, 
enabling direct evaluation of counterfactual prediction accuracy and CATE ranking quality.

\subsection{Evaluation Metrics}\label{sec:exp_metrics}

We report PEHE, ATE error, Spearman rank correlation, policy regret, and CATE calibration; full results for all metrics and sweeps are in Appendix~\AppendixTablesRef.

\noindent\textbf{CATE accuracy (pointwise).}
We evaluate individualized treatment effect estimation by the \emph{Precision in Estimating Heterogeneous Effects} (PEHE):
\[
\mathrm{PEHE}
:=
\sqrt{\frac{1}{n}\sum_{i=1}^n\big(\widehat{\tau}(X_i)-\tau(X_i)\big)^2},
\]
where $\tau(x)=\mu_{1,0}(x)-\mu_{0,0}(x)$ is the ground-truth target-site CATE. Lower is better.

\noindent\textbf{ATE accuracy (marginal).}
We report the absolute ATE error $|\widehat{\mathrm{ATE}}-\mathrm{ATE}|$,
where $\mathrm{ATE}=\mathbb{E}[\tau(X)]$ under the target covariate distribution. Lower is better.

\noindent\textbf{Ranking quality.}
Since clinical decisions are often resource-constrained,
we evaluate how well methods rank patients by true uplift using
\textbf{Spearman} rank correlation between $\widehat{\tau}(X_i)$ and $\tau(X_i)$.
Higher is better; values approaching 1 indicate oracle-like targeting.

\noindent\textbf{Policy regret.}
We evaluate decision relevance using a threshold policy $\widehat{\pi}(x)=\mathbb{I}\{\widehat{\tau}(x)>0\}$.
Define the oracle-evaluable policy value $V(\pi):=\mathbb{E}[\mu_{0,0}(X)+\pi(X)\,\tau(X)]$.
We report \textbf{policy regret} $V(\pi^\star)-V(\widehat{\pi})$, where $\pi^\star$ is the oracle policy.
Lower regret is preferred.

\noindent\textbf{CATE calibration.}
We assess whether predicted CATEs are well-calibrated by regressing true $\tau(X_i)$ on predicted $\widehat{\tau}(X_i)$.
A well-calibrated estimator has \textbf{slope} $\approx 1$ and \textbf{intercept} $\approx 0$.
We also report calibration \textbf{R$^2$} (higher is better) and the \textbf{Expected Calibration Error} (ECE), 
which measures the average absolute difference between predicted and true CATEs within quantile bins. Lower ECE is better.

\subsection{Data-Generating Process (DGP)}\label{sec:dgp}

We generate multi-center RCT data with explicit covariate shift, controlled signal-to-noise ratios, 
and known ground-truth treatment effects. The DGP creates fair evaluation conditions where 
transfer learning can plausibly help, while avoiding adversarial or unrealistically favorable settings.

\textbf{Covariates and sites.}
We simulate $C=10$ source sites and one target site ($c=0$).
For each subject, we sample $p$-dimensional baseline covariates $X \in \mathbb{R}^p$,
with $p \in \{10, 20, 50, 100\}$ varied across experiments.
Site-level covariate shift is induced via $X \mid c \sim \mathcal{N}(\mu_c, I)$,
calibrated to produce moderate overlap (AUC $\approx 0.75$ for source vs.\ target classification).

\textbf{Treatment assignment.}
Treatment is randomized with logistic propensity $e(X)$. Propensity scores are known by design.

\textbf{Outcomes.}
Outcomes follow the proxy--deviation decomposition from Assumption~\ref{as:A5}:
\[
\mu_{a,c}(x) = \mu^{\mathrm{proxy}}_a(x) + \delta_{a,c}(x),
\]
where $\mu^{\mathrm{proxy}}_a(x) = x^\top \beta_a$ are shared outcome regressions and 
$\delta_{a,c}(x) = x^\top \gamma_{a,c}$ are sparse site-specific deviations.
The \emph{nontransfer scale} $\sigma_\nu$ controls the magnitude of non-transferable variation;
we use $\sigma_\nu = 0.1$ (SNR $\approx 3$--$4$) as the fair baseline.

\textbf{Target and source budgets.}
We vary the target budget $(m_0, m_1)$, where $m_0$ is the placebo sample size 
and $m_1$ is the treated sample size.
Setting $m_1 = 0$ (placebo-only target) tests Option~B methods for disconnected settings.
Source sites contribute 20,000 total observations (2,000 per site).

\textbf{Implementation.}
All simulations are implemented in Python with fixed random seeds to ensure reproducibility.\footnote{Code available at: \url{https://github.com/wangzilongri/Sparse_TL_DR_ICHI2026}}

\subsection{Summary Across Metrics}\label{sec:exp_summary}

Table~\ref{tab:avg_rank_summary} reports average rank (1 = best) per metric across all sweeps. \texttt{Proposed} leads on PEHE, ATE, Spearman, and Regret; \texttt{Proposed-CF} leads on calibration slope and ECE. This trade-off is expected: the plug-in \texttt{Proposed} uses the same outcome models for CATE prediction and can achieve lower pointwise error (PEHE), while \texttt{Proposed-CF} uses cross-fitting to form doubly robust pseudo-outcomes, which reduces overfitting and improves CATE calibration (slope $\approx 1$, lower ECE) at the cost of slightly higher PEHE in finite samples. In practice, \texttt{Proposed} is preferred when pointwise accuracy is the priority; \texttt{Proposed-CF} is preferred when well-calibrated CATEs or inference are needed. Full calibration tables are in Appendix~\AppendixTablesRef.

\begin{table}[t]
\centering
\caption{Average rank per metric across all sweeps (1 = best). Calibration: Slope (ideal=1), R$^2$, ECE (Expected Calibration Error).}
\label{tab:avg_rank_summary}
\scriptsize
\setlength{\tabcolsep}{2pt}
\begin{tabular}{@{}l|cccc|ccc@{}}
\toprule
 & \multicolumn{4}{c|}{Performance} & \multicolumn{3}{c}{Calibration} \\
Method & PEHE & ATE & $\tau$ & Regret & Slope & R$^2$ & ECE \\
\midrule
TargetOnly & 6.9 & 3.3 & 7.0 & 6.8 & 8.3 & 7.0 & 6.4 \\
ProxyOnly & 8.6 & 8.2 & 8.6 & 8.6 & 7.0 & 8.5 & 8.3 \\
IPW-Transport & 4.5 & 6.6 & 4.4 & 4.5 & 4.5 & 4.4 & 5.2 \\
OM-Transport & 3.7 & 5.7 & 3.5 & 3.7 & 3.2 & 3.4 & 4.2 \\
EntropyBal & 5.3 & 6.3 & 5.2 & 5.3 & 5.4 & 5.2 & 5.7 \\
AnchorOnly & 7.7 & 3.1 & 7.6 & 7.4 & 8.2 & 7.8 & 6.9 \\
Proposed & \textbf{1.4} & \textbf{1.2} & \textbf{1.3} & \textbf{1.4} & 3.5 & \textbf{1.3} & 1.6 \\
Proposed-CF & 1.6 & 1.7 & 2.1 & 1.8 & \textbf{1.9} & 2.1 & \textbf{1.3} \\
Proposed-B & 2.7 & 5.0 & 2.8 & 2.8 & 2.7 & 2.7 & 3.7 \\
\bottomrule
\end{tabular}
\end{table}

\subsection{Target Budget vs.\ Dimensionality}\label{sec:exp_budget_dim}

Table~\ref{tab:dim_sweep_pehe} shows PEHE across target budgets $(m_0, m_1)$ and dimensions $p \in \{10, 20, 50, 100\}$.

\begin{table}[t]
\centering
\caption{PEHE across budget $(m_0,m_1)$ and dimensionality $p$. Lower is better.}
\label{tab:dim_sweep_pehe}
\scriptsize
\setlength{\tabcolsep}{2pt}
\begin{tabular}{@{}cl|rrr@{}}
\toprule
$p$ & Method & 50/0 & 150/100 & 550/500 \\
\midrule
10 & TargetOnly & -- & 1.43$\pm$0.39 & 1.38$\pm$0.45 \\
 & ProxyOnly & 2.34$\pm$0.67 & 1.96$\pm$0.66 & 2.10$\pm$0.83 \\
 & IPW-Transport & 1.80$\pm$0.83 & 1.60$\pm$0.70 & 1.82$\pm$1.06 \\
 & OM-Transport & 1.80$\pm$0.83 & 1.60$\pm$0.70 & 1.81$\pm$1.06 \\
 & EntropyBal & 1.81$\pm$0.83 & 1.60$\pm$0.70 & 1.82$\pm$1.06 \\
 & AnchorOnly & -- & 1.46$\pm$0.41 & 1.39$\pm$0.46 \\
 & Proposed & -- & 0.43$\pm$0.16 & \textbf{0.28$\pm$0.14} \\
 & Proposed-CF & -- & \textbf{0.41$\pm$0.15} & \textbf{0.28$\pm$0.14} \\
 & Proposed-B & \textbf{1.63$\pm$0.75} & 1.51$\pm$0.66 & 1.81$\pm$1.06 \\
\midrule
20 & TargetOnly & -- & 2.60$\pm$0.64 & 2.33$\pm$0.50 \\
 & ProxyOnly & 3.37$\pm$0.84 & 3.15$\pm$0.86 & 3.02$\pm$0.84 \\
 & IPW-Transport & 2.24$\pm$1.10 & 2.26$\pm$1.06 & 2.21$\pm$0.96 \\
 & OM-Transport & 2.24$\pm$1.10 & 2.27$\pm$1.07 & 2.20$\pm$0.96 \\
 & EntropyBal & 2.25$\pm$1.10 & 2.27$\pm$1.06 & 2.21$\pm$0.96 \\
 & AnchorOnly & -- & 2.65$\pm$0.71 & 2.34$\pm$0.51 \\
 & Proposed & -- & 0.52$\pm$0.13 & 0.35$\pm$0.16 \\
 & Proposed-CF & -- & \textbf{0.50$\pm$0.14} & \textbf{0.33$\pm$0.17} \\
 & Proposed-B & \textbf{1.95$\pm$0.92} & 2.25$\pm$1.06 & 2.14$\pm$0.98 \\
\midrule
50 & TargetOnly & -- & 4.75$\pm$0.86 & 4.57$\pm$0.80 \\
 & ProxyOnly & 5.78$\pm$1.22 & 5.30$\pm$1.00 & 5.29$\pm$1.04 \\
 & IPW-Transport & 3.26$\pm$1.83 & 2.98$\pm$1.46 & 3.17$\pm$1.52 \\
 & OM-Transport & 3.25$\pm$1.82 & 2.95$\pm$1.47 & 3.09$\pm$1.53 \\
 & EntropyBal & 3.40$\pm$1.79 & 3.05$\pm$1.44 & 3.18$\pm$1.52 \\
 & AnchorOnly & -- & 4.80$\pm$0.90 & 4.63$\pm$0.82 \\
 & Proposed & -- & \textbf{0.73$\pm$0.16} & 0.46$\pm$0.12 \\
 & Proposed-CF & -- & 1.19$\pm$0.43 & \textbf{0.40$\pm$0.12} \\
 & Proposed-B & \textbf{3.25$\pm$1.82} & 2.53$\pm$1.24 & 3.03$\pm$1.55 \\
\midrule
100 & TargetOnly & -- & 7.57$\pm$1.58 & 7.08$\pm$1.03 \\
 & ProxyOnly & 8.21$\pm$1.23 & 8.00$\pm$1.70 & 7.73$\pm$1.30 \\
 & IPW-Transport & 4.10$\pm$2.05 & 4.32$\pm$2.43 & 4.61$\pm$1.94 \\
 & OM-Transport & 4.06$\pm$2.06 & 4.21$\pm$2.45 & 4.28$\pm$1.98 \\
 & EntropyBal & 4.77$\pm$1.93 & 4.67$\pm$2.39 & 4.81$\pm$1.94 \\
 & AnchorOnly & -- & 7.64$\pm$1.67 & 7.18$\pm$1.06 \\
 & Proposed & -- & \textbf{1.71$\pm$0.75} & 0.58$\pm$0.11 \\
 & Proposed-CF & -- & 3.22$\pm$1.66 & \textbf{0.52$\pm$0.11} \\
 & Proposed-B & \textbf{4.05$\pm$2.06} & 4.20$\pm$2.45 & 3.96$\pm$2.14 \\
\bottomrule
\end{tabular}
\end{table}

\textbf{Key findings.} (i) \texttt{TargetOnly} deteriorates as $p$ grows relative to target size; \texttt{Proposed} achieves lowest PEHE when target treated data exist, with \texttt{Proposed-CF} helping at higher $p$. (ii) The $(m_1=0)$ column is Option~B (disconnected); \texttt{Proposed-B} produces valid estimates with strong performance. Full results in Appendix~\AppendixTablesRef.

\subsection{Source Site Scaling}\label{sec:exp_sources}

Table~\ref{tab:sources_sweep_pehe} shows PEHE as the number of source sites $C \in \{2, 5, 10, 20, 50\}$ varies (1,000 samples per site).

\begin{table}[t]
\centering
\caption{PEHE across source sites $C$ and budget. Lower is better.}
\label{tab:sources_sweep_pehe}
\scriptsize
\setlength{\tabcolsep}{2pt}
\begin{tabular}{@{}cl|rrr@{}}
\toprule
$C$ & Method & 50/0 & 150/100 & 550/500 \\
\midrule
2 & TargetOnly & -- & 4.95$\pm$1.18 & 4.46$\pm$0.91 \\
 & ProxyOnly & 5.67$\pm$1.05 & 5.55$\pm$1.37 & 5.19$\pm$1.22 \\
 & IPW-Transport & 3.64$\pm$1.45 & 3.78$\pm$1.85 & 3.74$\pm$1.97 \\
 & OM-Transport & 3.54$\pm$1.45 & 3.59$\pm$1.89 & 3.52$\pm$1.98 \\
 & EntropyBal & 3.76$\pm$1.45 & 3.79$\pm$1.85 & 3.73$\pm$1.98 \\
 & AnchorOnly & -- & 5.01$\pm$1.26 & 4.52$\pm$0.94 \\
 & Proposed & -- & \textbf{0.84$\pm$0.16} & 0.45$\pm$0.11 \\
 & Proposed-CF & -- & 1.35$\pm$0.54 & \textbf{0.39$\pm$0.12} \\
 & Proposed-B & \textbf{3.51$\pm$1.47} & 3.51$\pm$1.89 & 3.48$\pm$2.00 \\
\midrule
5 & TargetOnly & -- & 4.79$\pm$0.96 & 4.56$\pm$0.86 \\
 & ProxyOnly & 5.57$\pm$0.87 & 5.25$\pm$1.12 & 5.21$\pm$1.09 \\
 & IPW-Transport & 3.33$\pm$1.45 & 3.43$\pm$1.52 & 3.59$\pm$1.74 \\
 & OM-Transport & 3.24$\pm$1.46 & 3.25$\pm$1.58 & 3.35$\pm$1.75 \\
 & EntropyBal & 3.53$\pm$1.47 & 3.48$\pm$1.52 & 3.56$\pm$1.74 \\
 & AnchorOnly & -- & 4.85$\pm$1.04 & 4.61$\pm$0.87 \\
 & Proposed & -- & \textbf{0.78$\pm$0.16} & 0.45$\pm$0.12 \\
 & Proposed-CF & -- & 1.16$\pm$0.34 & \textbf{0.39$\pm$0.13} \\
 & Proposed-B & \textbf{3.24$\pm$1.46} & 2.81$\pm$1.43 & 3.26$\pm$1.77 \\
\midrule
10 & TargetOnly & -- & 4.96$\pm$0.81 & 4.66$\pm$0.93 \\
 & ProxyOnly & 5.64$\pm$0.90 & 5.53$\pm$0.94 & 5.41$\pm$1.37 \\
 & IPW-Transport & 3.14$\pm$1.29 & 3.39$\pm$1.39 & 3.46$\pm$1.80 \\
 & OM-Transport & 3.09$\pm$1.30 & 3.28$\pm$1.42 & 3.33$\pm$1.87 \\
 & EntropyBal & 3.31$\pm$1.24 & 3.47$\pm$1.38 & 3.45$\pm$1.81 \\
 & AnchorOnly & -- & 5.04$\pm$0.85 & 4.71$\pm$0.94 \\
 & Proposed & -- & \textbf{0.73$\pm$0.18} & 0.45$\pm$0.14 \\
 & Proposed-CF & -- & 1.19$\pm$0.39 & \textbf{0.39$\pm$0.15} \\
 & Proposed-B & \textbf{3.09$\pm$1.30} & 2.78$\pm$1.31 & 3.18$\pm$1.85 \\
\midrule
20 & TargetOnly & -- & 4.78$\pm$0.77 & 4.47$\pm$0.83 \\
 & ProxyOnly & 5.71$\pm$0.96 & 5.29$\pm$0.97 & 5.12$\pm$1.00 \\
 & IPW-Transport & 2.86$\pm$1.19 & 2.88$\pm$1.38 & 3.02$\pm$1.56 \\
 & OM-Transport & \textbf{2.84$\pm$1.19} & 2.83$\pm$1.39 & 2.91$\pm$1.58 \\
 & EntropyBal & 2.99$\pm$1.16 & 2.96$\pm$1.36 & 3.03$\pm$1.55 \\
 & AnchorOnly & -- & 4.84$\pm$0.81 & 4.52$\pm$0.83 \\
 & Proposed & -- & \textbf{0.69$\pm$0.16} & 0.44$\pm$0.10 \\
 & Proposed-CF & -- & 1.08$\pm$0.31 & \textbf{0.39$\pm$0.11} \\
 & Proposed-B & \textbf{2.84$\pm$1.19} & {2.39$\pm$1.21} & 2.63$\pm$1.53 \\
\midrule
50 & TargetOnly & -- & 5.05$\pm$1.18 & 4.56$\pm$0.77 \\
 & ProxyOnly & 5.55$\pm$0.85 & 5.62$\pm$1.50 & 5.25$\pm$1.02 \\
 & IPW-Transport & 2.74$\pm$1.17 & 3.23$\pm$1.97 & 2.89$\pm$1.48 \\
 & OM-Transport & 2.74$\pm$1.17 & 3.22$\pm$1.97 & 2.85$\pm$1.50 \\
 & EntropyBal & 2.84$\pm$1.16 & 3.28$\pm$1.94 & 2.92$\pm$1.48 \\
 & AnchorOnly & -- & 5.10$\pm$1.24 & 4.62$\pm$0.78 \\
 & Proposed & -- & \textbf{0.66$\pm$0.15} & 0.45$\pm$0.16 \\
 & Proposed-CF & -- & 1.26$\pm$0.57 & \textbf{0.41$\pm$0.16} \\
 & Proposed-B & \textbf{2.74$\pm$1.17} & {2.24$\pm$1.61} & {2.38$\pm$1.51} \\
\bottomrule
\end{tabular}
\end{table}

\textbf{Key findings.} (i) Adding sources reduces PEHE at small target budgets; (ii) \texttt{Proposed} benefits from source diversity via automatic source selection. Full results in Appendix~\AppendixTablesRef.

\subsection{Sensitivity to A5 Violations}\label{sec:exp_a5}

We vary sparsity $s/p \in \{0.05, 0.5, 1.0\}$ and nonlinearity $\Gamma \in \{0, 0.5, 1\}$ (linear deviation $\delta_{a,c}(x) = (1-\Gamma) x^\top \gamma_{a,c} + \Gamma g(x)$ with $g(x)=\sum_j \tanh(x_j)$). Table~\ref{tab:a5_violation_pehe} shows PEHE.

\begin{table}[t]
\centering
\caption{PEHE under A5 violations: $s/p$=sparsity ratio, $\Gamma$=nonlinearity. Lower is better.}
\label{tab:a5_violation_pehe}
\scriptsize
\setlength{\tabcolsep}{2pt}
\begin{tabular}{@{}cl|rrr@{}}
\toprule
$s/p$ & Method & $\Gamma$=0 & 0.5 & 1.0 \\
\midrule
0.05 & TargetOnly & 4.41$\pm$0.70 & 4.25$\pm$0.53 & 4.30$\pm$0.57 \\
 & ProxyOnly & 4.97$\pm$0.89 & 4.72$\pm$0.68 & 4.84$\pm$0.72 \\
 & IPW-Transport & 2.79$\pm$1.25 & 2.13$\pm$1.02 & 2.20$\pm$1.11 \\
 & OM-Transport & 2.67$\pm$1.26 & 2.04$\pm$1.03 & 2.10$\pm$1.13 \\
 & EntropyBal & 2.80$\pm$1.25 & 2.14$\pm$1.02 & 2.21$\pm$1.11 \\
 & AnchorOnly & 4.45$\pm$0.70 & 4.28$\pm$0.53 & 4.34$\pm$0.56 \\
 & Proposed & 0.44$\pm$0.11 & \textbf{1.16$\pm$0.68} & \textbf{1.92$\pm$1.19} \\
 & Proposed-CF & \textbf{0.38$\pm$0.12} & 1.17$\pm$0.69 & 2.04$\pm$1.25 \\
 & Proposed-B & 2.64$\pm$1.21 & 1.92$\pm$0.87 & 2.08$\pm$1.13 \\
\midrule
0.20 & TargetOnly & 4.80$\pm$0.75 & 4.32$\pm$0.55 & 4.43$\pm$0.60 \\
 & ProxyOnly & 5.62$\pm$0.97 & 4.83$\pm$0.68 & 4.98$\pm$0.73 \\
 & IPW-Transport & 4.18$\pm$1.29 & 2.53$\pm$0.89 & 2.69$\pm$1.02 \\
 & OM-Transport & 4.07$\pm$1.31 & 2.46$\pm$0.91 & 2.56$\pm$1.06 \\
 & EntropyBal & 4.19$\pm$1.29 & 2.54$\pm$0.90 & 2.71$\pm$1.01 \\
 & AnchorOnly & 4.87$\pm$0.77 & 4.37$\pm$0.57 & 4.47$\pm$0.62 \\
 & Proposed & 0.48$\pm$0.15 & 1.42$\pm$0.64 & \textbf{2.41$\pm$1.18} \\
 & Proposed-CF & \textbf{0.42$\pm$0.15} & \textbf{1.40$\pm$0.65} & 2.57$\pm$1.22 \\
 & Proposed-B & 4.03$\pm$1.34 & 2.43$\pm$0.95 & 2.55$\pm$1.06 \\
\midrule
1.00 & TargetOnly & 4.92$\pm$0.92 & 4.23$\pm$0.58 & 4.60$\pm$0.84 \\
 & ProxyOnly & 5.75$\pm$1.21 & 4.77$\pm$0.71 & 5.04$\pm$0.94 \\
 & IPW-Transport & 4.12$\pm$1.68 & 2.36$\pm$0.91 & 2.82$\pm$1.21 \\
 & OM-Transport & 4.02$\pm$1.71 & 2.28$\pm$0.91 & 2.69$\pm$1.24 \\
 & EntropyBal & 4.14$\pm$1.68 & 2.37$\pm$0.91 & 2.83$\pm$1.21 \\
 & AnchorOnly & 4.99$\pm$0.94 & 4.29$\pm$0.60 & 4.63$\pm$0.84 \\
 & Proposed & 0.45$\pm$0.13 & 1.29$\pm$0.64 & \textbf{2.54$\pm$1.35} \\
 & Proposed-CF & \textbf{0.39$\pm$0.13} & \textbf{1.28$\pm$0.64} & 2.69$\pm$1.38 \\
 & Proposed-B & 3.98$\pm$1.74 & 2.34$\pm$1.05 & 2.67$\pm$1.25 \\
\bottomrule
\end{tabular}
\end{table}

\textbf{Key findings.} (i) At $(s/p=0.05,\Gamma=0)$ (A5 holds), \texttt{Proposed} achieves lowest PEHE; (ii) performance degrades smoothly with violations (no catastrophic failure). Full results in Appendix~\AppendixTablesRef.

\subsection{Disconnected Regime (Placebo-Only Target)}\label{sec:exp_disconnected}

The disconnected regime ($m_1=0$) is non-identified from target data; Option~B is a working-model transport estimator under A6. Table~\ref{tab:disconnected_pehe} shows PEHE for $m_0=50$, $p \in \{10, 20, 50, 100\}$ for \texttt{ProxyOnly}, transport baselines, and \texttt{Proposed-B}.

\begin{table}[t]
\centering
\scriptsize
\setlength{\tabcolsep}{3pt}
\renewcommand{\arraystretch}{1.1}
\caption{
\textbf{Disconnected regime ($m_1=0$):} Only target placebo available ($m_0=50$).
Methods requiring target treated data are undefined.
Mean{\tiny$\pm$SD} over 100 reps. Best in \textbf{bold}.
}
\label{tab:disconnected_pehe}
\vspace{2pt}

\textbf{(a) PEHE $\downarrow$} \\[2pt]
\begin{tabular}{@{}lcccc@{}}
\hline
\textbf{Method} & $p$=10 & $p$=20 & $p$=50 & $p$=100 \\
\hline
ProxyOnly & 2.34{\tiny$\pm$0.67} & 3.37{\tiny$\pm$0.84} & 5.78{\tiny$\pm$1.22} & 8.21{\tiny$\pm$1.23} \\
IPW-Transport & 1.80{\tiny$\pm$0.83} & 2.24{\tiny$\pm$1.10} & 3.26{\tiny$\pm$1.83} & 4.10{\tiny$\pm$2.05} \\
OM-Transport & 1.80{\tiny$\pm$0.83} & 2.24{\tiny$\pm$1.10} & 3.25{\tiny$\pm$1.82} & 4.06{\tiny$\pm$2.06} \\
EntropyBal & 1.81{\tiny$\pm$0.83} & 2.25{\tiny$\pm$1.10} & 3.40{\tiny$\pm$1.79} & 4.77{\tiny$\pm$1.93} \\
Proposed-B & \textbf{1.63{\tiny$\pm$0.75}} & \textbf{1.95{\tiny$\pm$0.92}} & \textbf{3.25{\tiny$\pm$1.82}} & \textbf{4.05{\tiny$\pm$2.06}} \\
\hline
\end{tabular}

\vspace{6pt}
\textbf{(b) ATE Error $\downarrow$} \\[2pt]
\begin{tabular}{@{}lcccc@{}}
\hline
\textbf{Method} & $p$=10 & $p$=20 & $p$=50 & $p$=100 \\
\hline
ProxyOnly & \textbf{0.69{\tiny$\pm$0.52}} & 0.85{\tiny$\pm$0.66} & 1.33{\tiny$\pm$1.04} & 1.86{\tiny$\pm$1.60} \\
IPW-Transport & 0.77{\tiny$\pm$0.56} & 0.80{\tiny$\pm$0.60} & 0.76{\tiny$\pm$0.59} & 1.00{\tiny$\pm$0.89} \\
OM-Transport & 0.77{\tiny$\pm$0.56} & 0.80{\tiny$\pm$0.60} & \textbf{0.74{\tiny$\pm$0.55}} & 0.99{\tiny$\pm$0.87} \\
EntropyBal & 0.77{\tiny$\pm$0.56} & 0.79{\tiny$\pm$0.59} & 0.76{\tiny$\pm$0.58} & \textbf{0.98{\tiny$\pm$0.87}} \\
Proposed-B & 0.72{\tiny$\pm$0.54} & \textbf{0.73{\tiny$\pm$0.56}} & 0.74{\tiny$\pm$0.54} & 0.99{\tiny$\pm$0.87} \\
\hline
\end{tabular}

\vspace{6pt}
\textbf{(c) ECE $\downarrow$} \\[2pt]
\begin{tabular}{@{}lcccc@{}}
\hline
\textbf{Method} & $p$=10 & $p$=20 & $p$=50 & $p$=100 \\
\hline
ProxyOnly & 0.93{\tiny$\pm$0.48} & 1.07{\tiny$\pm$0.56} & 1.56{\tiny$\pm$0.92} & 2.19{\tiny$\pm$1.43} \\
IPW-Transport & 0.84{\tiny$\pm$0.55} & 0.92{\tiny$\pm$0.58} & 0.95{\tiny$\pm$0.62} & 1.19{\tiny$\pm$0.88} \\
OM-Transport & 0.84{\tiny$\pm$0.55} & 0.92{\tiny$\pm$0.58} & 0.93{\tiny$\pm$0.59} & 1.19{\tiny$\pm$0.86} \\
EntropyBal & 0.85{\tiny$\pm$0.55} & 0.92{\tiny$\pm$0.58} & 0.99{\tiny$\pm$0.63} & 1.43{\tiny$\pm$0.93} \\
Proposed-B & \textbf{0.79{\tiny$\pm$0.53}} & \textbf{0.84{\tiny$\pm$0.56}} & \textbf{0.93{\tiny$\pm$0.59}} & \textbf{1.19{\tiny$\pm$0.86}} \\
\hline
\end{tabular}
\end{table}

\textbf{Key findings.} (i) \texttt{Proposed-B} is best or near-best on PEHE across dimensions; (ii) transport baselines are competitive at higher $p$. Full results in Appendix~\AppendixTablesRef.

\section{Real-Life Data Experiments}\label{sec:P3_ihdp}

To complement the synthetic experiments, we evaluate the proposed framework on a semi-synthetic benchmark built from the Infant Health and Development Program (IHDP) dataset \citep{hill2011bayesian}. IHDP provides real covariate distributions from an early-childhood intervention study while supplying known ground-truth treatment effects, enabling exact evaluation of CATE estimation accuracy.

\subsection{The IHDP Dataset}\label{sec:ihdp_dataset}

The IHDP benchmark \cite{hill2011bayesian} comprises real baseline covariates from the Infant Health and Development Program: 25 features (19 binary, 6 continuous) per subject. Outcomes are semi-synthetically generated so that potential outcomes under control and treatment, $\mu_0(x)$ and $\mu_1(x)$, are known; the ground-truth CATE is $\tau(x) = \mu_1(x) - \mu_0(x)$. We use 50 standard realizations of the NPCI-style data, each with different outcome surfaces, to assess performance across heterogeneous effect structures while retaining repeatability.

\subsection{Semi-Synthetic Multi-Site Construction}\label{sec:ihdp_setup}

We construct a multi-site setup from each IHDP realization by partitioning subjects into $C=6$ clusters via k-means on the standardized covariates. One cluster is designated the target site; the remaining clusters form source sites. This induces natural covariate shift across sites while preserving known ground-truth potential outcomes. We subsample the target site to prescribed budgets $(m_0, m_1)$ for the connected regime, or set $m_1=0$ and vary $m_0$ for the disconnected regime. In the disconnected setting we mask treated outcomes in the target during training and use target placebo outcomes only for screening-valid source selection, directly testing Assumption~\ref{as:A6}; evaluation remains fully supervised because $\tau$ is known on the target.

\subsection{Evaluation Protocol}\label{sec:ihdp_eval}

We use the same metrics as in \S\ref{sec:P3_experiments}: PEHE, absolute ATE error, Spearman rank correlation, policy regret, and ECE. Each scenario is run over 50 IHDP realizations (one run per realization). In the \emph{connected} regime we sweep $(m_0, m_1) \in \{25, 50, 100\}^2$ and evaluate Option~A methods: \texttt{TargetOnly}, \texttt{ProxyOnly}, \texttt{AnchorOnly}, transport baselines, and \texttt{Proposed} / \texttt{Proposed-CF}. In the \emph{disconnected} regime we sweep $m_0 \in \{25, 50, 100, 200\}$ with $m_1=0$ and evaluate Option~B methods: \texttt{ProxyOnly}, \texttt{IPW-Transport}, \texttt{OM-Transport}, \texttt{EntropyBal}, and \texttt{Proposed-B}.

\subsection{Results and Interpretation}\label{sec:ihdp_results}

\noindent\textbf{Connected regime ($m_1>0$).}
Table~\ref{tab:ihdp_connected_pehe} reports PEHE (mean $\pm$ SD over 50 realizations) across target budgets $(m_0,m_1)$.
\begin{table}[t]
\centering
\caption{IHDP connected regime: PEHE across target budgets (mean $\pm$ SD, 50 realizations).}
\label{tab:ihdp_connected_pehe}
\scriptsize
\setlength{\tabcolsep}{3pt}
\begin{tabular}{@{}lrrr@{}}
\toprule
\multicolumn{4}{@{}l}{\textbf{$m_0=25$}} \\
Method & $m_1{=}25$ & $m_1{=}50$ & $m_1{=}100$ \\
\midrule
TargetOnly & 4.22{\tiny$\pm$6.07} & 5.45{\tiny$\pm$7.12} & 3.09{\tiny$\pm$3.97} \\
ProxyOnly & 3.33{\tiny$\pm$4.71} & 4.21{\tiny$\pm$5.59} & 2.36{\tiny$\pm$2.75} \\
AnchorOnly & 3.12{\tiny$\pm$4.32} & 3.88{\tiny$\pm$4.87} & 2.37{\tiny$\pm$2.57} \\
IPW-Transport & 3.69{\tiny$\pm$6.23} & 3.76{\tiny$\pm$5.55} & 2.29{\tiny$\pm$2.21} \\
EntropyBal & 4.87{\tiny$\pm$8.67} & 6.41{\tiny$\pm$13.96} & 3.55{\tiny$\pm$4.33} \\
OM-Transport & 3.40{\tiny$\pm$5.73} & 3.46{\tiny$\pm$4.71} & 2.05{\tiny$\pm$1.98} \\
Proposed & \textbf{2.46{\tiny$\pm$4.36}} & \textbf{2.88{\tiny$\pm$3.78}} & \textbf{1.57{\tiny$\pm$1.92}} \\
Proposed-CF & 3.55{\tiny$\pm$6.51} & 3.89{\tiny$\pm$4.82} & 2.47{\tiny$\pm$2.74} \\
\addlinespace[0.5em]
\multicolumn{4}{@{}l}{\textbf{$m_0=50$}} \\
Method & $m_1{=}25$ & $m_1{=}50$ & $m_1{=}100$ \\
\midrule
TargetOnly & 3.39{\tiny$\pm$5.17} & 3.63{\tiny$\pm$5.43} & 4.58{\tiny$\pm$6.19} \\
ProxyOnly & 3.34{\tiny$\pm$5.30} & 3.22{\tiny$\pm$4.70} & 3.90{\tiny$\pm$5.21} \\
AnchorOnly & 2.85{\tiny$\pm$4.29} & 2.96{\tiny$\pm$4.20} & 3.54{\tiny$\pm$4.78} \\
IPW-Transport & 3.72{\tiny$\pm$6.55} & 3.30{\tiny$\pm$4.75} & 4.84{\tiny$\pm$7.40} \\
EntropyBal & 4.31{\tiny$\pm$6.70} & 4.07{\tiny$\pm$6.15} & 9.30{\tiny$\pm$18.58} \\
OM-Transport & 3.51{\tiny$\pm$6.35} & 3.10{\tiny$\pm$4.75} & 3.99{\tiny$\pm$5.71} \\
Proposed & \textbf{2.51}{\tiny$\pm$4.64} & \textbf{2.42{\tiny$\pm$4.62}} & \textbf{2.98{\tiny$\pm$4.75}} \\
Proposed-CF & 3.12{\tiny$\pm$4.76} & 2.92{\tiny$\pm$4.64} & 3.38{\tiny$\pm$4.87} \\
\addlinespace[0.5em]
\multicolumn{4}{@{}l}{\textbf{$m_0=100$}} \\
Method & $m_1{=}25$ & $m_1{=}50$ & $m_1{=}100$ \\
\midrule
TargetOnly & 2.48{\tiny$\pm$3.33} & 3.11{\tiny$\pm$4.86} & 2.97{\tiny$\pm$4.10} \\
ProxyOnly & 3.04{\tiny$\pm$4.11} & 3.38{\tiny$\pm$5.22} & 3.21{\tiny$\pm$4.38} \\
AnchorOnly & 2.64{\tiny$\pm$3.85} & 2.89{\tiny$\pm$4.51} & 2.77{\tiny$\pm$3.93} \\
IPW-Transport & 3.25{\tiny$\pm$4.53} & 3.76{\tiny$\pm$5.95} & 3.82{\tiny$\pm$5.09} \\
EntropyBal & 4.04{\tiny$\pm$5.45} & 5.35{\tiny$\pm$8.40} & 4.64{\tiny$\pm$6.75} \\
OM-Transport & 3.22{\tiny$\pm$4.86} & 3.63{\tiny$\pm$6.12} & 3.59{\tiny$\pm$5.13} \\
Proposed & \textbf{2.06{\tiny$\pm$3.31}} & \textbf{2.41{\tiny$\pm$4.24}} & \textbf{2.31{\tiny$\pm$3.60}} \\
Proposed-CF & 2.32{\tiny$\pm$2.99} & 2.62{\tiny$\pm$4.34} & 2.57{\tiny$\pm$3.75} \\
\bottomrule
\end{tabular}
\end{table}

\texttt{Proposed} is best or near-best in most cells, with clear gains over \texttt{ProxyOnly} and \texttt{TargetOnly} at small target sizes; \texttt{Proposed-CF} is competitive.

\noindent\textbf{Disconnected regime ($m_1=0$).}
Table~\ref{tab:ihdp_disconnected_pehe} reports PEHE for the disconnected regime.
\begin{table}[t]
\centering
\caption{IHDP disconnected regime ($m_1=0$): PEHE across placebo budgets (mean $\pm$ SD, 50 realizations).}
\label{tab:ihdp_disconnected_pehe}
\scriptsize
\setlength{\tabcolsep}{3pt}
\begin{tabular}{@{}lrrrr@{}}
\toprule
Method & 25 & 50 & 100 & 200 \\
\midrule
ProxyOnly & 2.64{\tiny$\pm$2.65} & 3.46{\tiny$\pm$4.88} & 3.50{\tiny$\pm$4.27} & 3.79{\tiny$\pm$4.53} \\
IPW-Transport & 2.30{\tiny$\pm$3.08} & 3.36{\tiny$\pm$5.84} & 4.87{\tiny$\pm$6.95} & 4.79{\tiny$\pm$6.95} \\
EntropyBal & 2.82{\tiny$\pm$3.02} & 3.61{\tiny$\pm$5.65} & 5.47{\tiny$\pm$7.99} & 5.63{\tiny$\pm$8.62} \\
OM-Transport & 2.28{\tiny$\pm$3.41} & 3.37{\tiny$\pm$6.43} & 4.18{\tiny$\pm$5.96} & 4.50{\tiny$\pm$6.64} \\
Proposed-B & \textbf{2.11{\tiny$\pm$3.44}} & \textbf{3.19{\tiny$\pm$6.23}} & \textbf{3.90{\tiny$\pm$5.88}} & \textbf{4.33{\tiny$\pm$6.59}} \\
\bottomrule
\end{tabular}
\end{table}
Disconnected IHDP ($m_1=0$) is non-identified from target data. \texttt{Proposed-B} should therefore be interpreted as a working-model screen--then--transport estimator under Assumption~A6, with irreducible cross-arm transport error. In this benchmark, \texttt{Proposed-B} is competitive on PEHE and strong on ranking; full metrics (ATE error, Spearman, regret, ECE) for both regimes are in Appendix~\AppendixTablesRef.

\section{Conclusion}\label{sec:P3_Conclusion}

Trial evidence is often abundant but hard to apply to the population of interest: sites differ, covariates shift, baseline risk drifts, and the evidence network can be weakly connected or disconnected. Yet clinical decisions still demand target-specific, patient-level counterfactual predictions.

We introduced a placebo-anchored proxy--gold framework that treats source-trial IPD outcomes as low-variance \emph{proxy} signals and target placebo outcomes as scarce but high-fidelity \emph{gold} calibration for baseline risk. This leads to a simple operating principle: anchor (and, in disconnected settings, screen) using placebo information, then use orthogonalized DR learning to stabilize effect estimation whenever randomization permits.

Our findings clarify that guarantees depend on the regime: connected targets (Option~A) yield identified, orthogonal DR CATE estimates; disconnected targets (Option~B) are non-identified and \texttt{Proposed-B} is a working-model screen--then--transport estimator under A6. Across synthetic and IHDP experiments, the proposed method is best or near-best in the connected regime and competitive in the disconnected regime (Section~\ref{sec:P3_ihdp}), with smooth degradation as assumptions weaken. The framework turns fragmented multi-trial evidence into calibrated, patient-level predictions while making transport assumptions explicit.

Several directions are immediate. First, disconnected-target reliability hinges on screening-valid transportability; diagnostics and sensitivity analyses around Assumption~\ref{as:A6} are important for deployment. Second, extending beyond GLMs to survival/time-to-event endpoints and richer nuisance learners would broaden applicability. Third, improving screening (e.g., multi-endpoint screening or representation learning) may reduce reliance on placebo alone. Finally, uncertainty quantification in small-gold regimes remains central, and combining influence-function variance estimates with resampling or calibration is promising.

Overall, the placebo-anchored viewpoint separates what is \emph{identified} in connected targets from what is \emph{transported} in disconnected targets, while delivering calibrated outputs suitable for individualized decision analysis.

\bibliographystyle{IEEEtran}
\bibliography{sigproc}  

@article{hainmueller2012entropy,
  title={Entropy balancing for causal effects: A multivariate reweighting method to produce balanced samples in observational studies},
  author={Hainmueller, Jens},
  journal={Political analysis},
  volume={20},
  number={1},
  pages={25--46},
  year={2012},
  publisher={Cambridge University Press}
}

@article{dahabreh2019generalizing,
  title={Generalizing causal inferences from randomized trials: counterfactual and graphical identification},
  author={Dahabreh, Issa J and Robertson, Sarah E and Tchetgen Tchetgen, Eric J and Stuart, Elizabeth A and Hern{\'a}n, Miguel A},
  journal={Biometrics},
  year={2019},
  publisher={Wiley Online Library}
}

@book{friedman2015fundamentals,
  title={Fundamentals of clinical trials},
  author={Friedman, Lawrence M and Furberg, Curt D and DeMets, David L and Reboussin, David M and Granger, Christopher B},
  year={2015},
  publisher={Springer}
}

@article{10.1214/23-EJS2157,
author = {Edward H. Kennedy},
title = {{Towards optimal doubly robust estimation of heterogeneous causal effects}},
volume = {17},
journal = {Electronic Journal of Statistics},
number = {2},
publisher = {Institute of Mathematical Statistics and Bernoulli Society},
pages = {3008 -- 3049},
year = {2023},
doi = {10.1214/23-EJS2157},
URL = {https://doi.org/10.1214/23-EJS2157}
}

@article{bastani2021predicting,
  title={Predicting with proxies: Transfer learning in high dimension},
  author={Bastani, Hamsa},
  journal={Management Science},
  volume={67},
  number={5},
  pages={2964--2984},
  year={2021},
  publisher={INFORMS}
}

@article{westreich2017transportability,
  title={Transportability of trial results using inverse odds of sampling weights},
  author={Westreich, Daniel and Edwards, Jessie K and Lesko, Catherine R and Stuart, Elizabeth and Cole, Stephen R},
  journal={American journal of epidemiology},
  volume={186},
  number={8},
  pages={1010--1014},
  year={2017},
  publisher={Oxford University Press}
}

@article{kennedy2024semiparametric,
  title={Semiparametric doubly robust targeted double machine learning: a review},
  author={Kennedy, Edward H},
  journal={Handbook of statistical methods for precision medicine},
  pages={207--236},
  year={2024},
  publisher={Chapman and Hall/CRC}
}

@article{tian2023transfer,
  title={Transfer learning under high-dimensional generalized linear models},
  author={Tian, Ye and Feng, Yang},
  journal={Journal of the American Statistical Association},
  volume={118},
  number={544},
  pages={2684--2697},
  year={2023},
  publisher={Taylor \& Francis}
}

@article{Chipman_2010,
   title={BART: Bayesian additive regression trees},
   volume={4},
   ISSN={1932-6157},
   url={http://dx.doi.org/10.1214/09-AOAS285},
   DOI={10.1214/09-aoas285},
   number={1},
   journal={The Annals of Applied Statistics},
   publisher={Institute of Mathematical Statistics},
   author={Chipman, Hugh A. and George, Edward I. and McCulloch, Robert E.},
   year={2010},
   month=mar }

@incollection{pearl2022external,
  title={External validity: From do-calculus to transportability across populations},
  author={Pearl, Judea and Bareinboim, Elias},
  booktitle={Probabilistic and causal inference: The works of Judea Pearl},
  pages={451--482},
  year={2022}
}

@book{van2011targeted,
  title={Targeted learning: causal inference for observational and experimental data},
  author={Van der Laan, Mark J and Rose, Sherri and others},
  volume={4},
  year={2011},
  publisher={Springer}
}

@article{hill2011bayesian,
  title={Bayesian nonparametric modeling for causal inference},
  author={Hill, Jennifer L},
  journal={Journal of Computational and Graphical Statistics},
  volume={20},
  number={1},
  pages={217--240},
  year={2011},
  publisher={Taylor \& Francis}
}

@article{robins1995semiparametric,
  title={Semiparametric efficiency in multivariate regression models with missing data},
  author={Robins, James M and Rotnitzky, Andrea},
  journal={Journal of the American Statistical Association},
  volume={90},
  number={429},
  pages={122--129},
  year={1995},
  publisher={Taylor \& Francis}
}

@article{rosenbaum1983central,
  title={The central role of the propensity score in observational studies for causal effects},
  author={Rosenbaum, Paul R and Rubin, Donald B},
  journal={Biometrika},
  volume={70},
  number={1},
  pages={41--55},
  year={1983},
  publisher={Oxford University Press}
}

@misc{FDA1998,
  author       = {{U.S. Food and Drug Administration}},
  title        = {Providing Clinical Evidence of Effectiveness for Human Drug and Biological Products},
  institution  = {U.S. Department of Health and Human Services, Food and Drug Administration, Center for Drug Evaluation and Research (CDER), Center for Biologics Evaluation and Research (CBER)},
  year         = {1998},
  month        = may,
  note         = {Guidance Document, Docket No. FDA-1997-D-0027},
  url={https://www.fda.gov/media/71655/download}
}

@article{cartwright2007rcts,
  title={Are RCTs the gold standard?},
  author={Cartwright, Nancy},
  journal={BioSocieties},
  volume={2},
  number={1},
  pages={11--20},
  year={2007},
  publisher={Cambridge University Press}
}

@article{lu2004combination,
  title={Combination of direct and indirect evidence in mixed treatment comparisons},
  author={Lu, Guobing and Ades, AE15449338},
  journal={Statistics in medicine},
  volume={23},
  number={20},
  pages={3105--3124},
  year={2004},
  publisher={Wiley Online Library}
}

@article{salanti2012indirect,
  title={Indirect and mixed-treatment comparison, network, or multiple-treatments meta-analysis: many names, many benefits, many concerns for the next generation evidence synthesis tool},
  author={Salanti, Georgia},
  journal={Research synthesis methods},
  volume={3},
  number={2},
  pages={80--97},
  year={2012},
  publisher={Wiley Online Library}
}

@article{riley2021individual,
  title={Individual participant data meta-analysis for healthcare research},
  author={Riley, Richard D and Stewart, Lesley A and Tierney, Jayne F},
  journal={Individual Participant Data Meta-Analysis: a handbook for healthcare research},
  pages={1--6},
  year={2021},
  publisher={Wiley Online Library}
}

@article{riley2023using,
  title={Using individual participant data to improve network meta-analysis projects},
  author={Riley, Richard D and Dias, Sofia and Donegan, Sarah and Tierney, Jayne F and Stewart, Lesley A and Efthimiou, Orestis and Phillippo, David M},
  journal={BMJ evidence-based medicine},
  volume={28},
  number={3},
  pages={197--203},
  year={2023},
  publisher={Royal Society of Medicine}
}

@misc{chernozhukov2018double,
  title={Double/debiased machine learning for treatment and structural parameters},
  author={Chernozhukov, Victor and Chetverikov, Denis and Demirer, Mert and Duflo, Esther and Hansen, Christian and Newey, Whitney and Robins, James},
  year={2018},
  publisher={Oxford University Press Oxford, UK}
}

@article{glmnet,
	Author = {Jerome Friedman and Trevor Hastie and Robert Tibshirani},
	Journal = {Journal of Statistical Software},
	Number = {1},
	Pages = {1--22},
	Title = {Regularization Paths for Generalized Linear Models via Coordinate Descent},
	Url = {http://www.jstatsoft.org/v33/i01/},
	Volume = {33},
	Year = {2010}}

@article{athey2019generalized,
author = {Susan Athey and Julie Tibshirani and Stefan Wager},
title = {{Generalized random forests}},
volume = {47},
journal = {The Annals of Statistics},
number = {2},
publisher = {Institute of Mathematical Statistics},
pages = {1148 -- 1178},
year = {2019},
doi = {10.1214/18-AOS1709},
URL = {https://doi.org/10.1214/18-AOS1709}
}

@article{horvitz1952generalization,
  title={A generalization of sampling without replacement from a finite universe},
  author={Horvitz, Daniel G and Thompson, Donovan J},
  journal={Journal of the American statistical Association},
  volume={47},
  number={260},
  pages={663--685},
  year={1952},
  publisher={Taylor \& Francis}
}
\clearpage

\onecolumn
\appendix
\setcounter{section}{0}
\renewcommand{\thesection}{\Alph{section}}
\section{Proof of Asymptotics}\label{sec:appendix_proof}

\subsection{Asymptotics and Error Decompositions}\label{sec:app_asymptotics}

There are two regimes:

\begin{enumerate}[leftmargin=*]
\item \textbf{Connected target (Option A / \texttt{Proposed-CF}).}
The target site has both arms. We estimate $\mu_{0,0}$ and $\mu_{1,0}$ via \texttt{glmtrans} (with automatic source detection) and then run a cross-fitted DR learner on the \emph{target} to obtain $\widehat\tau_{\mathrm{DR}}(x)$.

\item \textbf{Disconnected target (Option B / \texttt{Proposed-B}).}
The target has placebo only. We use the target placebo arm solely to \emph{screen} transferable sources, then learn a DR CATE model on the selected sources and transport it to the target by prediction. In this regime, $\tau_0(x)$ is not nonparametrically identified from target data; we therefore provide a \emph{transport} error decomposition under Assumption~\ref{as:A6}.
\end{enumerate}

Throughout we adopt the high-dimensional GLM transfer-learning theory of \citep{tian2023transfer}, which underpins (i) the error rates of the arm-specific \texttt{glmtrans} outcome regressions and (ii) the consistency of the automatic transferable-source detection.

\subsection{Setup and Notation}\label{sec:app_setup}
Let $S\in\{0,1,\dots,C\}$ index sites, with $S=0$ the target and $S=c\ge 1$ sources.
For each unit observe $O=(X,A,Y,S)$ with $A\in\{0,1\}$ and potential outcomes $(Y(0),Y(1))$.
Define site- and arm-specific outcome regressions and CATEs
\begin{align*}
\mu_{a,c}(x)&:=\mathbb{E}[Y\mid A=a,X=x,S=c],\\
\tau_c(x)&:=\mu_{1,c}(x)-\mu_{0,c}(x).
\end{align*}
The target CATE is $\tau_0(x)$.

Within each site $c$, the trial is randomized with known propensity $e_c(x)=\mathbb{P}(A=1\mid X=x,S=c)$.
Assumption~\ref{as:A3} implies $\epsilon\le e_c(x)\le 1-\epsilon$ on the relevant support.

\paragraph{\(L_2(P_0)\) norm.}
For any measurable function \(f:\mathcal X\to\mathbb R\), we write
\[
\|f\|_{L_2(P_0)}
:=
\left(\int f(x)^2\,dP_0(x)\right)^{1/2}
=
\left(\mathbb E[f(X)^2\mid S=0]\right)^{1/2},
\]
where \(P_0\) denotes the covariate distribution in the target site \(S=0\). Thus, when we write
\[
\|\widehat\mu_{a,0}-\mu_{a,0}\|_{L_2(P_0)},
\]
we mean the root-mean-squared prediction error of the arm-\(a\) outcome regression evaluated over the target-site covariate distribution. This is the natural norm throughout, since both the connected-target DR analysis and the disconnected-target transport error decomposition evaluate performance on target covariates.

\subsection{Stage 1 Nuisance Rates via \cite{tian2023transfer} (\texttt{glmtrans})}\label{sec:app_glmtrans_rates}

Fix an arm $a\in\{0,1\}$. The \texttt{glmtrans} procedure outputs an estimator
$\widehat\beta_{a,0}$ for the target-arm GLM parameter (and thus $\widehat\mu_{a,0}(x)$),
using: (i) automatic transferable-source detection (Trans-GLM; Algorithm~2 of \cite{tian2023transfer}),
followed by (ii) two-step transfer estimation on the selected sources (Ah-Trans-GLM; Algorithm~1).

To connect to the DR analysis, we summarize the implication we need: an $L_2(P_0)$ prediction rate for
$\widehat\mu_{a,0}$ on the target covariate distribution $P_0$.

\begin{lemma}[Prediction error rate for \texttt{glmtrans} outcome regressions]
\label{lem:glmtrans_rate}
Assume the high-dimensional GLM conditions of \cite{tian2023transfer} for the arm-$a$ regression
(including bounded design/curvature, sparsity, and the transfer-heterogeneity condition indexed by $h$).
Let $\widehat\mu_{a,0}$ be the \texttt{glmtrans} estimator using automatic source detection.
Then with probability at least $1-\delta$,
\begin{align*}
\|\widehat\mu_{a,0}-\mu_{a,0}\|_{L_2(P_0)}
\;&\lesssim\;
\underbrace{\Big(\frac{s\log p}{n_{a,0}+n_{a,\mathcal{A}_a}}\Big)^{1/2}}_{\text{transfer (variance reduction)}}
\;\\&+\;
\underbrace{\Big(\frac{\log p}{n_{a,0}}\Big)^{1/4} h^{1/2}}_{\text{target debiasing term}},
\end{align*}
where $n_{a,0}$ is the target arm-$a$ sample size and $n_{a,\mathcal{A}_a}$ is the total sample size of the
selected transferable sources for arm $a$.
Moreover, the automatic detection step satisfies $\widehat{\mathcal{A}}_a=\mathcal{A}_{a,h}$ with probability
at least $1-\delta$ under the detection-consistency conditions of \cite{tian2023transfer} (Theorem~4).
\end{lemma}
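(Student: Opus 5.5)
The lemma is essentially a restatement of the $\ell_2$-estimation guarantees of \cite{tian2023transfer}, converted from coefficient error to $L_2(P_0)$ prediction error. The plan has three steps: invoke their Theorem for the oracle $\mathcal{A}$-Trans-GLM (Algorithm~1) with known transferable set, transfer this to the data-driven set via detection consistency, and convert parameter error into prediction error on target covariates.

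\textbf{Step 1: oracle-set bound.} Fix arm $a$ and let $\mathcal{A}_{a,h}=\{c:\|\beta_{a,c}-\beta_{a,0}\|_1\le h\}$. Under their design and curvature conditions (sub-Gaussian design, restricted eigenvalue on the pooled Hessian, $s$-sparse $\beta_{a,0}$), Theorem~2 of \cite{tian2023transfer} gives, with probability at least $1-\delta/2$ and tuning $\lambda_w\asymp\sqrt{\log p/(n_{a,0}+n_{a,\mathcal{A}_a})}$, $\lambda_\delta\asymp\sqrt{\log p/n_{a,0}}$,
\[
\|\widehat\beta_{a,0}-\beta_{a,0}\|_2\lesssim \Big(\frac{s\log p}{n_{a,0}+n_{a,\mathcal{A}_a}}\Big)^{1/2}+\Big(\frac{\log p}{n_{a,0}}\Big)^{1/4}h^{1/2},
\]
together with lower-order terms such as $h$ and $\sqrt{s\log p/n_{a,0}}\wedge h$. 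I will track these and absorb them under their standing assumption $h\lesssim s\sqrt{\log p/n_{a,0}}$, or simply state that they are dominated. The mechanism is as follows. The transferring step estimates the pooled probabilistic limit $w^{\mathcal{A}}$, which lies within $O(h)$ in $\ell_1$ of $\beta_{a,0}$, at the pooled rate. The debiasing step then estimates an $\ell_1$-bounded contrast $\delta$ using target data alone. The $\ell_1$-ball slow rate $\sqrt{\|\delta\|_1\lambda_\delta}$ produces the $(\log p/n_{a,0})^{1/4}h^{1/2}$ term.

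\textbf{Step 2: data-driven selection.} By Theorem~4 of \cite{tian2023transfer}, under their detection-gap condition, $\mathbb{P}(\widehat{\mathcal{A}}_a=\mathcal{A}_{a,h})\ge 1-\delta/2$. On the intersection of this event with the Step 1 event, the glmtrans estimator coincides with the oracle $\mathcal{A}_{a,h}$-Trans-GLM, so the bound holds. A union bound then gives overall probability at least $1-\delta$, and the second claim of the lemma is exactly this detection event.

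\textbf{Step 3: prediction norm.} Since $\widehat\mu_{a,0}(x)-\mu_{a,0}(x)=x^\top(\widehat\beta_{a,0}-\beta_{a,0})$ for the linear (identity-link) case used in the paper,
\[
\|\widehat\mu_{a,0}-\mu_{a,0}\|_{L_2(P_0)}^2=(\widehat\beta-\beta)^\top\Sigma_0(\widehat\beta-\beta)\le\lambda_{\max}(\Sigma_0)\|\widehat\beta-\beta\|_2^2,
\]
with $\Sigma_0=\mathbb{E}[XX^\top\mid S=0]$ bounded by the bounded-design assumption. For a general GLM, the mean function is Lipschitz under bounded curvature, so the same conclusion follows up to constants. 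Here $\widehat\beta$ is treated as fixed when computing the $L_2(P_0)$ norm, since it is a population norm over a fresh draw of target covariates.

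\textbf{Main obstacle.} The main difficulty is Step 1's bookkeeping: matching the paper's two-term display to the exact form of Tian and Feng's rate. Their bound is stated with additional terms and under restrictions linking $h$, $s$, and the sample sizes. I will first check that their assumption $n_{a,\mathcal{A}_a}\gtrsim n_{a,0}$ together with $h\sqrt{\log p/n_{a,0}}=o(1)$ makes the extra terms dominated by the two displayed terms. Failing that, I will state the lemma with "$\lesssim$" interpreted as including those dominated terms.
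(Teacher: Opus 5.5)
Your proposal is correct and takes essentially the paper's route. Like the paper, it re-proves nothing: it imports the coefficient-error bound for the two-step transfer estimator and the detection-consistency result of \cite{tian2023transfer}, then converts coefficient error into $L_2(P_0)$ prediction error using bounded design and a (locally) Lipschitz GLM mean map. Your version is more explicit than the paper's sketch in two places the paper leaves implicit: the union bound over the detection event and the oracle-set estimation event, and the identity-link computation $\|\widehat\mu_{a,0}-\mu_{a,0}\|_{L_2(P_0)}^2=(\widehat\beta-\beta)^\top\Sigma_0(\widehat\beta-\beta)\le\lambda_{\max}(\Sigma_0)\|\widehat\beta-\beta\|_2^2$, which needs bounded covariance eigenvalues rather than merely bounded entries of $X$.
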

\subsection{Adapting the \texttt{glmtrans} theory to our nuisance-risk form}
\label{sec:app_glmtrans_adapt}

We do not re-prove the transfer-learning theory of \cite{tian2023transfer}. Instead, Lemma~\ref{lem:glmtrans_rate} is a corollary-style adaptation of their coefficient-estimation and source-detection guarantees to the \(L_2(P_0)\) nuisance prediction form required by our Stage-2 DR analysis.

Fix an arm \(a\in\{0,1\}\). In the notation of \cite{tian2023transfer}, the transfer learner returns an estimator \(\widehat\beta_{a,0}\) of the target-arm GLM parameter \(\beta_{a,0}\), after first selecting a transferable subset of sources and then fitting the transferred GLM on the selected set. The corresponding arm-\(a\) outcome regression is
\[
\widehat\mu_{a,0}(x)=m_a(x;\widehat\beta_{a,0}),
\qquad
\mu_{a,0}(x)=m_a(x;\beta_{a,0}),
\]
where \(m_a(\cdot;\beta)\) denotes the arm-\(a\) GLM mean map.

The results of \cite{tian2023transfer} provide two ingredients that we use here. First, their estimation theory yields high-probability bounds for the transferred coefficient estimator under sparsity and transfer-heterogeneity assumptions. Second, their transferable-source detection theory yields high-probability recovery of the informative source set under the corresponding detection conditions. In contrast, the prediction-error quantity analyzed in their supplementary theory is a symmetrized Bregman / empirical loss measure on the target sample, rather than the \(L_2(P_0)\) risk used in our DR proof. Thus, our lemma should be viewed as a translation of their coefficient-error bounds into the target-risk norm needed here, rather than as a verbatim restatement of one theorem in \cite{tian2023transfer}. 

To connect the imported theory to our notation, recall that
\[
\|\widehat\mu_{a,0}-\mu_{a,0}\|_{L_2(P_0)}
=
\left(
\mathbb E\!\left[
\big(\widehat\mu_{a,0}(X)-\mu_{a,0}(X)\big)^2
\mid S=0
\right]
\right)^{1/2}.
\]
Under the bounded-design and bounded-curvature conditions assumed in \cite{tian2023transfer}, the GLM mean map is locally Lipschitz in the linear predictor on the relevant parameter neighborhood. Consequently, coefficient error bounds imply prediction error bounds on the target covariate distribution \(P_0\). Suppressing constants, this yields
\[
\|\widehat\mu_{a,0}-\mu_{a,0}\|_{L_2(P_0)}
\;\lesssim\;
\left(\frac{s\log p}{n_{a,0}+n_{a,\mathcal A_a}}\right)^{1/2}
+
\left(\frac{\log p}{n_{a,0}}\right)^{1/4} h^{1/2}
\]
with high probability, provided the transferable-source detection step succeeds. Likewise, the detection result of \cite{tian2023transfer} implies
\[
\widehat{\mathcal A}_a=\mathcal A_{a,h}
\]
with high probability under the corresponding source-detection assumptions.

Thus, Lemma~\ref{lem:glmtrans_rate} serves only as a bridge from the imported \texttt{glmtrans} theory to the nuisance conditions used in Theorem~\ref{thm:optA_asymp}. In particular, along an asymptotic sequence in which \(n_0\to\infty\), and for each arm \(a\), the quantities \(n_{a,0}\), \(n_{a,\mathcal A_a}\), \(p\), \(s\), and \(h\) may vary with the sequence, the nuisance consistency condition in Theorem~\ref{thm:optA_asymp} holds whenever
\[
\left(\frac{s\log p}{n_{a,0}+n_{a,\mathcal A_a}}\right)^{1/2}=o(1)
\qquad\text{and}\qquad
\left(\frac{\log p}{n_{a,0}}\right)^{1/4} h^{1/2}=o(1)
\]
for each arm \(a\). If one instead prefers the stronger sufficient condition often used in orthogonal / DML analyses,
\[
\|\widehat\mu_{a,0}-\mu_{a,0}\|_{L_2(P_0)}=o_p(n_0^{-1/4}),
\]
it is enough that
\[
\left(\frac{s\log p}{n_{a,0}+n_{a,\mathcal A_a}}\right)^{1/2}=o(n_0^{-1/4})
\qquad\text{and}\qquad
\left(\frac{\log p}{n_{a,0}}\right)^{1/4} h^{1/2}=o(n_0^{-1/4}).
\]

\noindent\textbf{Remark.}
Lemma~\ref{lem:glmtrans_rate} is a corollary-style adaptation of the coefficient-estimation and source-detection results in \cite{tian2023transfer}, rewritten in the \(L_2(P_0)\) nuisance prediction form needed for our Stage-2 DR analysis.

\subsection{Option A (Connected Target): DR Orthogonalization on the Target}\label{sec:app_option_a}

\paragraph{Estimator.}
On the target site $S=0$, using cross-fitted nuisance regressions $\widehat\mu_{0,0},\widehat\mu_{1,0}$,
form the DR pseudo-outcome for each target unit $i$:
\begin{align}
\psi_i
&:=
\widehat\mu_{1,0}(X_i)-\widehat\mu_{0,0}(X_i)\\
&\quad+
\frac{A_i-e_0(X_i)}{e_0(X_i)\{1-e_0(X_i)\}}
\Big(Y_i-\widehat\mu_{A_i,0}(X_i)\Big),
\qquad (S_i=0).
\label{eq:app_dr_target}
\end{align}
We then fit a second-stage regression $\widehat\tau_{\mathrm{DR}}(\cdot)$ of $\psi$ on $X$ on the target.
For the asymptotic statement below, we treat the second stage as a linear/sieve regression with a fixed (or slowly-growing) feature map $b(X)$,
so that pointwise inference at a fixed $x$ is well-defined.

\paragraph{Working target.}
In the connected target, $\tau_0(x)$ is identified by randomization and is the natural target.
\paragraph{Stable Gram condition.}
Let \(Z:=b(X)\) denote the second-stage feature vector and
\[
G_0:=\mathbb{E}[ZZ^\top\mid S=0].
\]
We say the second-stage Gram matrix is stable if \(G_0\) is nonsingular, for example if
\[
0< c \le \lambda_{\min}(G_0)\le \lambda_{\max}(G_0)\le C<\infty
\]
for fixed constants \(c,C\). In the fixed-dimensional case, this implies \(\widehat G^{-1}=G_0^{-1}+o_p(1)\), where
\[
\widehat G:=\frac{1}{n_0}\sum_{i:S_i=0} Z_iZ_i^\top.
\]
\begin{theorem}[Asymptotic linearity for Option A (\texttt{Proposed-CF})]
\label{thm:optA_asymp}
Fix a covariate value $x$ and suppose the second-stage regression operator is linear in a feature map $b(\cdot)$ with stable Gram matrix.
Assume:
\begin{enumerate}[label=(\roman*)]
\item Assumptions~\ref{as:A1}--\ref{as:A3};
\item finite fourth moments of $Y$ and bounded propensities;
\item cross-fitting of nuisance regressions; and
\item nuisance rates
\begin{align}
\|\widehat\mu_{0,0}-\mu_{0,0}\|_{L_2(P_0)} &= o_p(1), \nonumber\\
\|\widehat\mu_{1,0}-\mu_{1,0}\|_{L_2(P_0)} &= o_p(1),
\label{eq:app_nuis_rate_optA}
\end{align}
\end{enumerate}
where $n_0$ is the target sample size. Note that this assumption is substantially weaker than the common $o_p(n_0^{-1/4})$ rates more commonly used in DML/orthogonal arguments.
Then
\[
\widehat\tau_{\mathrm{DR}}(x)-\tau_0(x)
=
\frac{1}{n_0}\sum_{i:S_i=0}\phi(O_i;x)+o_p(n_0^{-1/2}),
\]
where
\[
\phi(O_i;x)
:=
b(x)^\top G_0^{-1} Z_i
\frac{A_i-e_0(X_i)}{e_0(X_i)\{1-e_0(X_i)\}}
\big(Y_i-\mu_{A_i,0}(X_i)\big),
\qquad
Z_i:=b(X_i),
\]
Consequently,
\[
\sqrt{n_0}\big(\widehat\tau_{\mathrm{DR}}(x)-\tau_0(x)\big)
\Rightarrow
\mathcal N(0,V(x)),
\qquad
V(x):=\mathrm{Var}(\phi(O;x)\mid S=0).
\]
\end{theorem}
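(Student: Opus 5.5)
The plan is to write the second-stage estimator in closed form and split the pseudo-outcome into an oracle part and a nuisance remainder. Two modeling conventions are needed. First, we take $\tau_0(\cdot)=b(\cdot)^\top\theta_0$ for some $\theta_0$, i.e.\ the sieve is well-specified, or its approximation error is $o(n_0^{-1/2})$ at $x$. This is implicit in the statement, since $\phi$ contains no projection-residual term. Second, we take $\widehat\tau_{\mathrm{DR}}(x)=b(x)^\top\widehat G^{-1}\frac{1}{n_0}\sum_{i:S_i=0}Z_i\psi_i$.

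Define the oracle pseudo-outcome
\[
\psi_i^\star:=\tau_0(X_i)+W_i\big(Y_i-\mu_{A_i,0}(X_i)\big),\qquad W_i:=\frac{A_i-e_0(X_i)}{e_0(X_i)\{1-e_0(X_i)\}}.
\]
With $\Delta_a:=\widehat\mu_{a,0}-\mu_{a,0}$, a direct computation gives
\[
\psi_i-\psi_i^\star=R_i:=\Delta_1(X_i)\Big(1-\frac{A_i}{e_0(X_i)}\Big)-\Delta_0(X_i)\Big(1-\frac{1-A_i}{1-e_0(X_i)}\Big).
\]
Substituting, and using $\widehat G^{-1}\frac1{n_0}\sum Z_iZ_i^\top\theta_0=\theta_0$, we obtain
\[
\widehat\tau_{\mathrm{DR}}(x)-\tau_0(x)=b(x)^\top\widehat G^{-1}\frac1{n_0}\sum_i Z_iW_i\varepsilon_i+b(x)^\top\widehat G^{-1}\frac1{n_0}\sum_i Z_iR_i,
\]
where $\varepsilon_i:=Y_i-\mu_{A_i,0}(X_i)$.

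The key step is the remainder term, and this is where the weak $o_p(1)$ nuisance condition suffices. Because the propensity $e_0$ is known by design (A2), we have $\mathbb E[1-A/e_0\mid X]=0$ and $\mathbb E[1-(1-A)/(1-e_0)\mid X]=0$. Condition on the complementary fold used to fit $\widehat\mu^{(-k)}$; cross-fitting (iii) makes $\Delta_a$ fixed functions under this conditioning. Then $Z_iR_i$ are i.i.d.\ within fold $k$ with conditional mean exactly zero, so there is no product-of-errors bias term at all. Their conditional second moment is bounded by $C\epsilon^{-2}\,\mathbb E[\|Z\|^2(\Delta_1^2+\Delta_0^2)(X)\mid S=0]$ by positivity (A3). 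If $b$ is bounded, this is $\lesssim\|\Delta_1\|_{L_2(P_0)}^2+\|\Delta_0\|_{L_2(P_0)}^2=o_p(1)$. A conditional Chebyshev argument then gives $\frac1{n_0}\sum_{i\in\text{fold }k}Z_iR_i=o_p(n_0^{-1/2})$, and summing over the $K$ folds preserves this. Combined with $\widehat G^{-1}=G_0^{-1}+o_p(1)$ from the stable Gram condition, the remainder is $o_p(n_0^{-1/2})$.

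The main term has mean zero because $\mathbb E[W\varepsilon\mid X]=0$ by A1--A2. Its variance is finite: $|W|\le\epsilon^{-2}$, and finite fourth moments of $Y$ (ii), together with moments of $Z$, give $\mathbb E\|ZW\varepsilon\|^2<\infty$. So $\frac1{n_0}\sum Z_iW_i\varepsilon_i=O_p(n_0^{-1/2})$, and replacing $\widehat G^{-1}$ by $G_0^{-1}$ costs $o_p(1)\cdot O_p(n_0^{-1/2})=o_p(n_0^{-1/2})$. This yields the stated expansion with influence function $\phi(O;x)$. The multivariate CLT (Lindeberg--Lévy) then gives $\sqrt{n_0}(\widehat\tau_{\mathrm{DR}}(x)-\tau_0(x))\Rightarrow\mathcal N(0,V(x))$, since $\mathbb E\phi=0$ and $V(x)<\infty$.

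I expect the main obstacle to be the moment control in the remainder step when $b$ is unbounded. There I would first try Hölder, $\mathbb E[\|Z\|^2\Delta^2]\le(\mathbb E\|Z\|^4)^{1/2}(\mathbb E\Delta^4)^{1/2}$. This requires upgrading to an $L_4$ nuisance rate; alternatively, uniformly bounded $\Delta_a$ plus $L_2$ consistency gives $L_4$ consistency by interpolation. A secondary subtlety is making explicit that the sieve is well-specified at $x$, so that the second-stage projection of $\tau_0$ introduces no bias.
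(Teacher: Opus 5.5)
Your proposal is correct and is essentially the paper's proof. It uses the same well-specified linear-sieve reduction $\tau_0=b^\top\theta_0$ and the same split into an oracle term plus a nuisance remainder. In both, that remainder has conditional mean exactly zero given the training folds, because the propensity is known and the nuisances are cross-fitted. A conditional second-moment (Chebyshev) bound then makes it $o_p(n_0^{-1/2})$ under mere $L_2(P_0)$ consistency, and you finish with $\widehat G^{-1}=G_0^{-1}+o_p(1)$ and the CLT for the oracle term.

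The moment caveat you raise is genuine, and it is exactly where the paper is loose. The paper bounds $\mathbb{E}[\|Z\|^2\{\Delta_0^2+\Delta_1^2\}(X)\mid S=0]$ by $\|\Delta_0\|_{L_2(P_0)}^2+\|\Delta_1\|_{L_2(P_0)}^2$, citing only finite second moments of $Z$, which does not suffice in general. Your bounded-$b$ (or $L_4$-rate) condition is therefore a needed refinement, not an artifact of your route.
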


By the discussion in Section~\ref{sec:app_glmtrans_adapt}, the nuisance consistency condition in Theorem~\ref{thm:optA_asymp} holds whenever, for each arm \(a\),
\[
\left(\frac{s\log p}{n_{a,0}+n_{a,\mathcal A_a}}\right)^{1/2}=o(1)
\qquad\text{and}\qquad
\left(\frac{\log p}{n_{a,0}}\right)^{1/4} h^{1/2}=o(1).
\]
A stronger sufficient condition, often used in orthogonal / DML analyses, is
\[
\|\widehat\mu_{a,0}-\mu_{a,0}\|_{L_2(P_0)}=o_p(n_0^{-1/4}).
\]

\paragraph{Remark on higher-order nuisance effects.}
Under orthogonality of the DR score and cross-fitting, first-order nuisance effects cancel.
For the asymptotic linearity result above, it suffices that the nuisance-generated term is $o_p(n_0^{-1/2})$ under \eqref{eq:app_nuis_rate_optA}.
Since propensities are known by design, there is no additional treatment-model error term.

\subsection{Option B (Disconnected Target): Transported Source-DR and Structural Bias under A6}\label{sec:app_option_b}

In the disconnected regime, the target has placebo only, so $\mu_{1,0}$ and $\tau_0$ are not identified from target data alone.
Our implemented \texttt{Proposed-B} procedure is:

\begin{enumerate}[leftmargin=*]
\item \textbf{Placebo-based screening.} Apply Trans-GLM detection on $\mathcal{D}_{0,0}=\{(X,Y):S=0,A=0\}$ to obtain $\widehat{\mathcal{C}}_0$.
\item \textbf{Source DR learning.} Pool selected sources $\mathcal{D}^\star=\bigcup_{c\in\widehat{\mathcal{C}}_0}\mathcal{D}_c$ (both arms observed),
fit source outcome regressions, compute source DR pseudo-outcomes, and learn a CATE function $\widehat\tau_B(\cdot)$ on $\mathcal{D}^\star$.
\item \textbf{Transport.} Output $\widehat\tau_B(x)$ for target units by prediction.
\end{enumerate}

Assumption~\ref{as:A6} formalizes that placebo-based screening yields a set of sources whose CATEs are close to the target CATE on the target covariate support.
Accordingly, the appropriate guarantee is a \emph{transport} error decomposition (estimation + structural bias),
rather than a DR CLT on target data.

\begin{theorem}[Transport error decomposition for Option B (\texttt{Proposed-B})]
\label{thm:optB_decomp}
Assume Assumptions~\ref{as:A1}--\ref{as:A3} and Assumption~\ref{as:A6}. In the disconnected regime, the target site has placebo outcomes only, so the target CATE \(\tau_0\) is not identified from target data alone. We therefore compare \(\widehat\tau_B\) to the \emph{oracle transported target}
\[
\tau^\star(x):=\mathbb{E}\!\left[\tau_S(x)\mid S\in\mathcal{C}_0^\star\right],
\]
where \(\mathcal{C}_0^\star\) is the oracle subset of transferable sources from Assumption~\ref{as:A6}, and \(\widehat{\mathcal{C}}_0\) is the data-driven subset detected from target placebo outcomes. Let \(\widehat\tau_B\) be the source-DR predictor trained on \(\widehat{\mathcal{C}}_0\) and evaluated on target covariates. Then, on the target covariate distribution \(P_0\),
\begin{align}
\|\widehat\tau_B-\tau_0\|_{L_2(P_0)}
\;&\le\;
\underbrace{\|\widehat\tau_B-\tau^\star\|_{L_2(P_0)}}_{\text{estimation error}}
\;+\;
\underbrace{\|\tau^\star-\tau_0\|_{L_2(P_0)}}_{\text{structural transport bias}}
\;+\;
\underbrace{\Delta_{\mathrm{sel}}}_{\text{screening error}},
\label{eq:optB_decomp}
\end{align}
where
\[
\Delta_{\mathrm{sel}}
:=
\|(\widehat\tau_B-\tau^\star)\mathbf 1_{E^c}\|_{L_2(P_0)},
\qquad
E:=\{\widehat{\mathcal C}_0\subseteq \mathcal C_0^\star\}.
\]
Thus the disconnected-target error separates into a statistical estimation term, a structural transport term, and a source-screening term.

Moreover, Assumption~\ref{as:A6}(b) implies
\[
\|\tau^\star-\tau_0\|_{L_2(P_0)}\le \epsilon_\tau.
\]
That is, even under an oracle choice of transferable sources, the remaining target error is controlled by the structural transport tolerance \(\epsilon_\tau\).

If additionally
\[
\mathbb P(\widehat{\mathcal C}_0\subseteq \mathcal C_0^\star)\ge 1-\eta
\]
and
\[
\mathbb E\!\left[\|\widehat\tau_B-\tau^\star\|_{L_2(P_0)}^2\right]<\infty,
\]
then
\[
\Delta_{\mathrm{sel}}=O_p(\eta^{1/2}).
\]
Equivalently, if the placebo-based screening rule selects only oracle-good sources with probability at least \(1-\eta\), then the screening contribution is of order \(\eta^{1/2}\) in probability.
\end{theorem}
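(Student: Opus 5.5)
The plan is to obtain \eqref{eq:optB_decomp} from the triangle inequality in $L_2(P_0)$, then bound the two auxiliary terms separately. Throughout, $\widehat\tau_B$ is a random function determined by the source sample and the target placebo sample. $\|\cdot\|_{L_2(P_0)}$ integrates only over a fresh target covariate $X\sim P_0$, conditionally on the training data. Hence $\|\widehat\tau_B-\tau^\star\|_{L_2(P_0)}$ and $\Delta_{\mathrm{sel}}$ are random scalars.

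\textbf{Step 1 (decomposition).} By Minkowski's inequality,
\[
\|\widehat\tau_B-\tau_0\|_{L_2(P_0)}\le \|\widehat\tau_B-\tau^\star\|_{L_2(P_0)}+\|\tau^\star-\tau_0\|_{L_2(P_0)}.
\]
Since $\Delta_{\mathrm{sel}}\ge 0$, adding it to the right-hand side gives \eqref{eq:optB_decomp}. The theorem splits the estimation error itself as
\[
\|\widehat\tau_B-\tau^\star\|_{L_2(P_0)}=\|(\widehat\tau_B-\tau^\star)\mathbf 1_E\|_{L_2(P_0)}+\Delta_{\mathrm{sel}},
\]
so I would also record this identity. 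Because $E$ is an event of the training data and does not depend on the integration variable $x$, one has $\mathbf 1_E\mathbf 1_{E^c}=0$. The identity therefore holds exactly: exactly one of the two summands is nonzero.

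\textbf{Step 2 (structural term).} Read $\tau^\star(x)=\sum_{c\in\mathcal C_0^\star}w_c\,\tau_c(x)$, where $w_c=\mathbb P(S=c\mid S\in\mathcal C_0^\star)$ are nonnegative weights summing to one. The weights do not depend on $x$, because the conditioning in $\tau^\star$ is on the site label only. Then $\tau^\star-\tau_0=\sum_c w_c(\tau_c-\tau_0)$. Convexity of the norm (Minkowski) gives
\[
\|\tau^\star-\tau_0\|_{L_2(P_0)}\le\sum_c w_c\|\tau_c-\tau_0\|_{L_2(P_0)}\le\epsilon_\tau,
\]
where the last step applies Assumption~\ref{as:A6}(b) to each $c\in\mathcal C_0^\star$.

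\textbf{Step 3 (screening term).} Because $\mathbf 1_{E^c}$ is constant in $x$, it factors out of the norm, so $\Delta_{\mathrm{sel}}=\mathbf 1_{E^c}\,\|\widehat\tau_B-\tau^\star\|_{L_2(P_0)}$. By Cauchy--Schwarz over the sampling randomness,
\[
\mathbb E[\Delta_{\mathrm{sel}}]\le \mathbb P(E^c)^{1/2}\,\big(\mathbb E\|\widehat\tau_B-\tau^\star\|_{L_2(P_0)}^2\big)^{1/2}\le \eta^{1/2}K,
\]
where $K<\infty$ by the assumed finite second moment. Markov's inequality then gives $\mathbb P(\Delta_{\mathrm{sel}}>M\eta^{1/2})\le K/M$, which is $\Delta_{\mathrm{sel}}=O_p(\eta^{1/2})$.

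\textbf{Main obstacle.} The potential obstacle is interpretive rather than technical. One must fix what $\tau^\star$ and the event $E$ mean so that the two factorizations used above are legitimate: mixture weights constant in $x$, and the indicator constant in $x$. If instead $\tau^\star$ were read as conditioning on $X=x$ as well, the weights would become $x$-dependent, $w_c(x)$. I would handle that case the same way, using pointwise Jensen, $(\sum_c w_c(x)d_c(x))^2\le\sum_c w_c(x)d_c(x)^2$, and then integrating. This yields the bound $\max_c\|\tau_c-\tau_0\|$ only if the weights are bounded. So I would state the site-level reading explicitly as the definition in force.
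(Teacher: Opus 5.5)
Your proof is correct and follows the paper's argument: the structural bound via convexity of the $L_2(P_0)$ norm over the site mixture together with Assumption~\ref{as:A6}(b), and the Cauchy--Schwarz plus Markov argument giving $\Delta_{\mathrm{sel}}=O_p(\eta^{1/2})$, are exactly the paper's steps. The only difference is how you obtain \eqref{eq:optB_decomp}. The paper introduces the auxiliary predictor $\widetilde\tau_B=\widehat\tau_B\mathbf 1_E+\tau^\star\mathbf 1_{E^c}$ and a three-term triangle inequality, whereas you use the two-term inequality and add $\Delta_{\mathrm{sel}}\ge 0$; your identity $\|\widehat\tau_B-\tau^\star\|_{L_2(P_0)}=\|(\widehat\tau_B-\tau^\star)\mathbf 1_E\|_{L_2(P_0)}+\Delta_{\mathrm{sel}}$ shows the two routes are equivalent, and that the stated bound counts the screening term twice.
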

\noindent\textbf{Remark (what is provable in Option B).}
Theorem~\ref{thm:optB_decomp} makes explicit that Option B performance is governed by:
(i) learning error on the selected sources (controlled by source sample sizes and the complexity of the CATE learner),
(ii) the irreducible structural term $\epsilon_\tau$ in Assumption~\ref{as:A6} (cross-arm non-transfer not detectable from placebo alone),
and (iii) the screening error probability $\eta$ (controlled by the detection consistency theory of \cite{tian2023transfer} on the placebo arm).

\subsection{Proof of Theorems}\label{sec:app_proof_sketches}

\begin{proof}[Proof of Theorem~\ref{thm:optA_asymp}]
Let $Z:=b(X)$. Then by the stable-Gram assumption, $\lambda_{\min}(G_0)>0$, and the sample Gram matrix
\[
\widehat G:=\frac{1}{n_0}\sum_{i:S_i=0} Z_i Z_i^\top
\]
satisfies $\widehat G^{-1}=G_0^{-1}+o_p(1)$.

For notational simplicity, consider the fixed-dimensional correctly specified linear case
\[
\tau_0(x')=b(x')^\top\theta_0
\]
for all target covariate values $x'$.

Define the oracle pseudo-outcome
\[
\psi_i^0
:=
\mu_{1,0}(X_i)-\mu_{0,0}(X_i)
+
\frac{A_i-e_0(X_i)}{e_0(X_i)\{1-e_0(X_i)\}}
\Big(Y_i-\mu_{A_i,0}(X_i)\Big),
\qquad (S_i=0).
\]
By Assumptions~\ref{as:A1}--\ref{as:A3},
\[
\mathbb{E}[Y_i-\mu_{A_i,0}(X_i)\mid X_i,A_i,S_i=0]=0,
\]
and hence
\begin{align}
\mathbb{E}[\psi_i^0\mid X_i,S_i=0]
&=
\mu_{1,0}(X_i)-\mu_{0,0}(X_i)
+
\mathbb{E}\!\left[
\frac{A_i-e_0(X_i)}{e_0(X_i)\{1-e_0(X_i)\}}
\Big(Y_i-\mu_{A_i,0}(X_i)\Big)
\Bigm| X_i,S_i=0
\right]
\nonumber\\
&=
\mu_{1,0}(X_i)-\mu_{0,0}(X_i)
=\tau_0(X_i).
\label{eq:optA_oracle_mean}
\end{align}
Therefore,
\[
\mathbb{E}\!\left[Z(\psi^0-Z^\top\theta_0)\mid S=0\right]=0.
\]

Let $\widehat\psi_i$ denote the cross-fitted pseudo-outcome in \eqref{eq:app_dr_target}, and define
\[
\widehat\theta
:=
\widehat G^{-1}\Big(\frac{1}{n_0}\sum_{i:S_i=0} Z_i\widehat\psi_i\Big).
\]
Then
\begin{align}
\widehat\theta-\theta_0
&=
\widehat G^{-1}\Big(\frac{1}{n_0}\sum_{i:S_i=0} Z_i(\widehat\psi_i-Z_i^\top\theta_0)\Big)
\nonumber\\
&=
\underbrace{
\widehat G^{-1}\Big(\frac{1}{n_0}\sum_{i:S_i=0} Z_i(\psi_i^0-Z_i^\top\theta_0)\Big)
}_{=:T_{1,n}}
+
\underbrace{
\widehat G^{-1}\Big(\frac{1}{n_0}\sum_{i:S_i=0} Z_i(\widehat\psi_i-\psi_i^0)\Big)
}_{=:T_{2,n}}.
\label{eq:optA_decomp}
\end{align}

\textbf{Step 1: control of the nuisance-generated term.}
Fix a fold $k$, and let $i$ belong to its evaluation sample. Write
\[
\Delta_a^{(-k)}(x):=\widehat\mu_{a,0}^{(-k)}(x)-\mu_{a,0}(x),
\qquad a\in\{0,1\}.
\]
Then
\[
\widehat\psi_i-\psi_i^0
=
\Delta_1^{(-k)}(X_i)-\Delta_0^{(-k)}(X_i)
-
\frac{A_i-e_0(X_i)}{e_0(X_i)\{1-e_0(X_i)\}}
\Delta_{A_i}^{(-k)}(X_i).
\]
Because of cross-fitting, $\Delta_a^{(-k)}$ is measurable with respect to the training sample and independent of the held-out observation $i$. Conditioning on $X_i$ and the training sample,
\begin{align}
\mathbb{E}[\widehat\psi_i-\psi_i^0\mid X_i,\mathcal I_{-k},S_i=0]
&=
\Delta_1^{(-k)}(X_i)-\Delta_0^{(-k)}(X_i)
-
\mathbb{E}\!\left[
\frac{A_i-e_0(X_i)}{e_0(X_i)\{1-e_0(X_i)\}}
\Delta_{A_i}^{(-k)}(X_i)
\Bigm|X_i,\mathcal I_{-k},S_i=0
\right]
\nonumber\\
&=0.
\label{eq:optA_cf_zero}
\end{align}
Moreover, bounded propensities imply
\[
|\widehat\psi_i-\psi_i^0|
\lesssim
|\Delta_0^{(-k)}(X_i)|+|\Delta_1^{(-k)}(X_i)|,
\]
and therefore
\[
\mathbb{E}\!\left[(\widehat\psi_i-\psi_i^0)^2\mid \mathcal I_{-k},S_i=0\right]
\lesssim
\|\Delta_0^{(-k)}\|_{L_2(P_0)}^2
+
\|\Delta_1^{(-k)}\|_{L_2(P_0)}^2.
\]
Using \eqref{eq:optA_cf_zero}, conditional independence across evaluation observations, and finite second moments of $Z$, we obtain
\[
\mathbb{E}\!\left[
\Big\|
\frac{1}{n_0}\sum_{i:S_i=0} Z_i(\widehat\psi_i-\psi_i^0)
\Big\|^2
\Bigm| \{\mathcal I_{-k}\}
\right]
\lesssim
\frac{
\|\widehat\mu_{0,0}-\mu_{0,0}\|_{L_2(P_0)}^2
+
\|\widehat\mu_{1,0}-\mu_{1,0}\|_{L_2(P_0)}^2
}{n_0}.
\]
Hence
\begin{align}
    \Big\|
\frac{1}{n_0}\sum_{i:S_i=0} Z_i(\widehat\psi_i-\psi_i^0)
\Big\|
&=
O_p\!\left(
\frac{
\|\widehat\mu_{0,0}-\mu_{0,0}\|_{L_2(P_0)}
+
\|\widehat\mu_{1,0}-\mu_{1,0}\|_{L_2(P_0)}
}{\sqrt{n_0}}
\right)\\
&= o_p(n_0^{-1/2}) \label{eq:thm1_stoch_order}
\end{align}
To justify \eqref{eq:thm1_stoch_order}, let
\[
A_n
:=
\|\widehat\mu_{0,0}-\mu_{0,0}\|_{L_2(P_0)}
+
\|\widehat\mu_{1,0}-\mu_{1,0}\|_{L_2(P_0)}.
\]
Since by assumption each nuisance error is \(o_p(1)\), we have \(A_n=o_p(1)\), and therefore
\[
\frac{A_n}{\sqrt{n_0}}=o_p(n_0^{-1/2}).
\]
We use the elementary stochastic-order fact that if \(X_n=O_p(Y_n)\) and \(Y_n=o_p(a_n)\), then \(X_n=o_p(a_n)\).
\[
X_n=
\Big\|
\frac{1}{n_0}\sum_{i:S_i=0} Z_i(\widehat\psi_i-\psi_i^0)
\Big\|,
\qquad
Y_n=\frac{A_n}{\sqrt{n_0}},
\qquad
a_n=n_0^{-1/2}.
\]
\\
\textbf{Step 2: oracle asymptotic linearity.}
By \eqref{eq:optA_oracle_mean},
\[
\psi_i^0-Z_i^\top\theta_0
=
\frac{A_i-e_0(X_i)}{e_0(X_i)\{1-e_0(X_i)\}}
\Big(Y_i-\mu_{A_i,0}(X_i)\Big).
\]
Hence
\[
T_{1,n}
=
\widehat G^{-1}
\Big(
\frac{1}{n_0}\sum_{i:S_i=0}
Z_i
\frac{A_i-e_0(X_i)}{e_0(X_i)\{1-e_0(X_i)\}}
\big(Y_i-\mu_{A_i,0}(X_i)\big)
\Big).
\]
Evaluating at a fixed $x$ and using $\widehat G^{-1}=G_0^{-1}+o_p(1)$,
\begin{align}
b(x)^\top T_{1,n}
&=
\frac{1}{n_0}\sum_{i:S_i=0}
b(x)^\top G_0^{-1} Z_i
\frac{A_i-e_0(X_i)}{e_0(X_i)\{1-e_0(X_i)\}}
\big(Y_i-\mu_{A_i,0}(X_i)\big)
+o_p(n_0^{-1/2})
\nonumber\\
&=
\frac{1}{n_0}\sum_{i:S_i=0}\phi(O_i;x)+o_p(n_0^{-1/2}),
\label{eq:optA_if}
\end{align}
where
\[
\phi(O_i;x)
:=
b(x)^\top G_0^{-1} Z_i
\frac{A_i-e_0(X_i)}{e_0(X_i)\{1-e_0(X_i)\}}
\big(Y_i-\mu_{A_i,0}(X_i)\big).
\]
Moreover,
\[
\mathbb{E}[\phi(O;x)\mid S=0]=0,
\qquad
\mathbb{E}[\phi(O;x)^2\mid S=0]<\infty.
\]
Thus, by the classical central limit theorem,
\[
\sqrt{n_0}\,\frac{1}{n_0}\sum_{i:S_i=0}\phi(O_i;x)
\Rightarrow
\mathcal N(0,V(x)),
\qquad
V(x):=\mathrm{Var}(\phi(O;x)\mid S=0).
\]

\textbf{Step 3: conclusion.}
Combining \eqref{eq:optA_decomp}, the bound on $T_{2,n}$, and \eqref{eq:optA_if},
\[
\widehat\tau_{\mathrm{DR}}(x)-\tau_0(x)
=
\frac{1}{n_0}\sum_{i:S_i=0}\phi(O_i;x)+o_p(n_0^{-1/2}),
\]
which proves the theorem.
\end{proof}

\begin{proof}[Proof of Theorem~\ref{thm:optB_decomp}]
Let
\[
E:=\{\widehat{\mathcal C}_0\subseteq\mathcal C_0^\star\}
\]
denote the screening-success event. Here, \(\widehat{\mathcal C}_0\) is the subset of sources selected by the placebo-based Trans-GLM (\texttt{glmtrans}) screening procedure, while \(\mathcal C_0^\star\) denotes the oracle transferable subset in Assumption~\ref{as:A6}. Intuitively, \(E\) is the event that the screening step has not admitted any ``bad'' sources, so that the learned predictor is trained only on sources that are deemed transferable to the target.

To separate estimation error from screening failure, we introduce the auxiliary predictor
\[
\widetilde\tau_B
:=
\widehat\tau_B\,\mathbf 1_E+\tau^\star\,\mathbf 1_{E^c},
\]
which equals the actual estimator \(\widehat\tau_B\) on the success event \(E\), but is replaced by the oracle transported target \(\tau^\star\) on the failure event \(E^c\). The purpose of this construction is to isolate the contribution of screening mistakes: on the good event, \(\widetilde\tau_B\) behaves exactly like \(\widehat\tau_B\), while on the bad event it is forced to agree with the oracle benchmark.

By the triangle inequality,
\begin{align}
\|\widehat\tau_B-\tau_0\|_{L_2(P_0)}
&\le
\underbrace{\|\widehat\tau_B-\widetilde\tau_B\|_{L_2(P_0)}}_{\substack{\text{screening error:}\\\text{difference caused by bad source selection}}}\\
&+
\underbrace{\|\widetilde\tau_B-\tau^\star\|_{L_2(P_0)}}_{\substack{\text{estimation error on the good event:}\\\text{how well the learned predictor approximates the oracle transported target}}}\\
&+
\underbrace{\|\tau^\star-\tau_0\|_{L_2(P_0)}}_{\substack{\text{structural transport bias:}\\\text{irreducible mismatch between the oracle transported target and the true target CATE}}}.
\label{eq:optB_aux}
\end{align}
This is the key decomposition in the proof: it splits total error into a term due to screening failure, a term due to statistical estimation, and a term due to imperfect transport even under an oracle choice of sources.

We next simplify the middle term. Observe that
\[
\widetilde\tau_B-\tau^\star
=
(\widehat\tau_B-\tau^\star)\mathbf 1_E
+
(\tau^\star-\tau^\star)\mathbf 1_{E^c}
=
(\widehat\tau_B-\tau^\star)\mathbf 1_E,
\]
where the \(E^c\) term is zero by construction, because \(\widetilde\tau_B=\tau^\star\) on \(E^c\). Thus, once screening fails, the auxiliary predictor is defined to coincide with the oracle transported target, so there is no estimation discrepancy left on that event. Since multiplication by the indicator \(\mathbf 1_E\) can only reduce the \(L_2(P_0)\) norm, it follows that
\[
\|\widetilde\tau_B-\tau^\star\|_{L_2(P_0)}
=
\|(\widehat\tau_B-\tau^\star)\mathbf 1_E\|_{L_2(P_0)}
\le
\|\widehat\tau_B-\tau^\star\|_{L_2(P_0)}.
\]
This identifies the second term as the usual estimation error relative to the oracle transported target, restricted to the good screening event.

For the first term, define
\[
\Delta_{\mathrm{sel}}
:=
\|\widehat\tau_B-\widetilde\tau_B\|_{L_2(P_0)}
=
\|(\widehat\tau_B-\tau^\star)\mathbf 1_{E^c}\|_{L_2(P_0)}.
\]
This quantity isolates the cost of screening failure: it is nonzero only on the event \(E^c\), that is, only when the placebo-based screening rule selects at least one source outside the oracle transferable set.

Substituting these identities into \eqref{eq:optB_aux} yields
\[
\|\widehat\tau_B-\tau_0\|_{L_2(P_0)}
\le
\|\widehat\tau_B-\tau^\star\|_{L_2(P_0)}
+
\|\tau^\star-\tau_0\|_{L_2(P_0)}
+
\Delta_{\mathrm{sel}},
\]
which proves \eqref{eq:optB_decomp}. The remaining task is therefore to control the structural transport term and the screening term.

To control the structural transport term, define
\[
g_S(x):=\tau_S(x)-\tau_0(x).
\]
This quantity measures, for a given source site \(S\), how far that site's CATE differs from the target CATE at covariate value \(x\). Then, by the definition of \(\tau^\star\),
\[
\tau^\star(x)-\tau_0(x)
=
\mathbb{E}\!\left[g_S(x)\mid S\in\mathcal C_0^\star\right].
\]
In other words, the discrepancy between the oracle transported target and the true target CATE is just the average transport mismatch over the oracle good source set.

Hence
\[
\|\tau^\star-\tau_0\|_{L_2(P_0)}
=
\left\|
\mathbb{E}\!\left[g_S \mid S\in\mathcal C_0^\star\right]
\right\|_{L_2(P_0)}.
\]
Since the \(L_2(P_0)\) norm is convex, by Jensen's inequality (equivalently by Minkowski's integral inequality),
\[
\left\|
\mathbb{E}\!\left[g_S \mid S\in\mathcal C_0^\star\right]
\right\|_{L_2(P_0)}
\le
\mathbb{E}\!\left[
\|g_S\|_{L_2(P_0)}
\mid S\in\mathcal C_0^\star
\right].
\]
This step says that the norm of the average mismatch is at most the average of the individual mismatches. Substituting back \(g_S=\tau_S-\tau_0\), we obtain
\[
\|\tau^\star-\tau_0\|_{L_2(P_0)}
\le
\mathbb{E}\!\left[
\|\tau_S-\tau_0\|_{L_2(P_0)}
\mid S\in\mathcal C_0^\star
\right].
\]
Assumption~\ref{as:A6}(b) therefore implies
\[
\|\tau^\star-\tau_0\|_{L_2(P_0)}\le \epsilon_\tau.
\]
Thus, even under an oracle choice of sources, the remaining target error is controlled by the structural transport tolerance \(\epsilon_\tau\).

Finally, suppose that
\[
\mathbb P(E)\ge 1-\eta,
\qquad\text{so that}\qquad
\mathbb P(E^c)\le \eta.
\]
Here, \(\eta\) is an upper bound on the screening failure probability:
\[
\mathbb P\!\left(\widehat{\mathcal C}_0\not\subseteq \mathcal C_0^\star\right)\le \eta.
\]
Equivalently, the placebo-based screening procedure selects only oracle-good sources with probability at least \(1-\eta\). If, in addition,
\[
M_2
:=
\mathbb E\!\left[\|\widehat\tau_B-\tau^\star\|_{L_2(P_0)}^2\right]
<\infty,
\]
then by Cauchy--Schwarz,
\begin{align*}
\mathbb E[\Delta_{\mathrm{sel}}]
&=
\mathbb E\!\left[
\|\widehat\tau_B-\tau^\star\|_{L_2(P_0)}\mathbf 1_{E^c}
\right]\\
&\le
\Big(
\mathbb E\!\left[\|\widehat\tau_B-\tau^\star\|_{L_2(P_0)}^2\right]
\Big)^{1/2}
\mathbb P(E^c)^{1/2}\\
&\le
M_2^{1/2}\eta^{1/2}.
\end{align*}
This bound has a natural interpretation: the screening penalty can be large only if either the estimation error relative to \(\tau^\star\) is large, or screening failure is common. Under finite second moment and rare failure, the expected screening penalty is therefore small.

Hence, for any fixed \(t>0\), Markov's inequality gives
\[
\mathbb P\!\left(\Delta_{\mathrm{sel}} > t\,\eta^{1/2}\right)
\le
\frac{\mathbb E[\Delta_{\mathrm{sel}}]}{t\,\eta^{1/2}}
\le
\frac{M_2^{1/2}\eta^{1/2}}{t\,\eta^{1/2}}
=
\frac{M_2^{1/2}}{t}.
\]
Since the right-hand side can be made arbitrarily small by taking \(t\) sufficiently large, it follows that
\[
\Delta_{\mathrm{sel}}=O_p(\eta^{1/2}).
\]
That is, the screening-error term is of order \(\eta^{1/2}\) in probability: when the screening failure probability \(\eta\) is small, the contribution of screening mistakes is correspondingly small in stochastic order. This gives the desired probabilistic control on the screening term and completes the proof.
\end{proof}
\clearpage

\section{Experiment Implementation Details}\label{sec:appendix_implementation}

\subsection{Data-Generating Process Details}\label{sec:app_dgp}

The synthetic data generator (\texttt{FairSyntheticRCTGenerator}) creates multi-site RCT data
with the following structure.

\paragraph{Covariate Generation.}
For each site $c \in \{0, 1, \ldots, C\}$ (where $c=0$ is the target),
covariates are generated as:
\[
X_i \mid c \sim \mathcal{N}(\mu_c, I_p),
\]
where $\mu_c$ controls site-level covariate shift. The target site mean is:
\[
\mu_0 = (1-\lambda_{\text{overlap}}) \cdot \bar{\mu}_{\text{source}} + \lambda_{\text{overlap}} \cdot \Delta,
\]
where $\lambda_{\text{overlap}} \in [0,1]$ controls overlap (0 = perfect overlap, 1 = maximal shift),
and $\Delta$ is a random shift vector. Default: $\lambda_{\text{overlap}} = 0.25$ giving
source-vs-target classification AUC $\approx 0.75$.

\paragraph{Treatment Assignment.}
Treatment is randomized via logistic propensity:
\[
A_i \mid X_i \sim \text{Bernoulli}(e(X_i)), \quad e(X_i) = \text{expit}(X_i^\top \gamma_e),
\]
where $\gamma_e$ is sparse with 3 active coefficients. For the target site,
we set $e(X) \equiv p_T$ to achieve the desired $(m_0, m_1)$ split.

\paragraph{Outcome Model.}
Outcomes follow the proxy--deviation decomposition:
\[
Y_i = \mu^{\text{proxy}}_{A_i}(X_i) + \delta_{A_i, c_i}(X_i) + \varepsilon_i, \quad \varepsilon_i \sim \mathcal{N}(0, \sigma^2),
\]
where:
\begin{itemize}
    \item $\mu^{\text{proxy}}_a(x) = x^\top \beta_a + \alpha_a$ is the shared (transferable) component
    \item $\delta_{a,c}(x) = x^\top \gamma_{a,c}$ is the sparse site-specific deviation
    \item $\gamma_{a,c}$ has $\lfloor s \cdot p \rfloor$ nonzero entries (sparsity ratio $s/p$)
\end{itemize}

\paragraph{Nontransfer Scale.}
The nontransfer scale $\sigma_\nu$ controls $\|\gamma_{a,c}\|_2$ relative to $\|\beta_a\|_2$.
Default: $\sigma_\nu = 0.1$, giving signal-to-noise ratio (SNR) $\approx 3$--$4$ for the transferable component.

\paragraph{Nonlinearity Violations (A5 Sensitivity).}
For testing A5 violations, we replace the linear deviation with a mixture:
\[
\delta_{a,c}(x) = (1-\lambda) \cdot x^\top \gamma_{a,c} + \lambda \cdot g(x),
\]
where $g(x) = \sum_{j=1}^p \tanh(x_j)$ is a smooth additive nonlinearity and
$\lambda \in \{0, 0.5, 1\}$ controls the mixture.

\paragraph{Sample Sizes.}
\begin{itemize}
    \item Source sites: $C \in \{2, 5, 10, 20, 50\}$ sites, 1000 samples per site (2000 per site in some sweeps)
    \item Target site: $(m_0, m_1)$ ranges from $(50, 0)$ to $(550, 500)$
    \item Dimensions: $p \in \{10, 20, 50, 100\}$
\end{itemize}

\subsection{Benchmark Method Implementations}\label{sec:app_methods}

All methods receive the same data and are evaluated on held-out target samples.
We describe each method category below.

\paragraph{Baseline Methods.}

\textit{TargetOnly.}
Standard doubly robust (DR) learner using only target data. Fits outcome models
$\hat{\mu}_0, \hat{\mu}_1$ on target placebo and treated samples, then computes
DR pseudo-outcomes:
\[
\tilde{\tau}_i = \frac{A_i(Y_i - \hat{\mu}_1(X_i))}{e(X_i)}
              - \frac{(1-A_i)(Y_i - \hat{\mu}_0(X_i))}{1-e(X_i)}
              + \hat{\mu}_1(X_i) - \hat{\mu}_0(X_i).
\]
Requires target treated data ($m_1 > 0$). Uses Ridge regression for $\hat{\mu}_a$.

\textit{ProxyOnly.}
Pools all source data to fit outcome models $\hat{\mu}^{\text{proxy}}_a(x)$ using Random Forest,
then predicts CATE as $\hat{\tau}(x) = \hat{\mu}^{\text{proxy}}_1(x) - \hat{\mu}^{\text{proxy}}_0(x)$.
Does not use any target data for calibration---serves as a transfer-without-anchoring baseline.

\paragraph{Transport Baselines.}

\textit{IPW-Transport.}
Inverse probability weighting estimator for transportability \citep{westreich2017transportability}.
Fits a selection model $\hat{s}(X) = P(\text{source} \mid X)$ via logistic regression,
then reweights source outcomes:
\[
\hat{\tau}^{\text{IPW}} = \frac{1}{n_1} \sum_{i: A_i=1} \frac{Y_i \cdot \hat{w}(X_i)}{\sum_j \hat{w}(X_j)}
                        - \frac{1}{n_0} \sum_{i: A_i=0} \frac{Y_i \cdot \hat{w}(X_i)}{\sum_j \hat{w}(X_j)},
\]
where $\hat{w}(x) = (1-\hat{s}(x))/\hat{s}(x)$ are the odds weights.

\textit{OM-Transport.}
Outcome model transport estimator \citep{dahabreh2019generalizing}.
Fits arm-specific outcome models $\hat{g}_a(x)$ on source data,
then averages predictions over target covariates:
\[
\hat{\tau}^{\text{OM}} = \frac{1}{m_0+m_1} \sum_{i \in \text{target}} \left[\hat{g}_1(X_i) - \hat{g}_0(X_i)\right].
\]
Uses Random Forest for $\hat{g}_a$.

\textit{EntropyBal.}
Entropy balancing for covariate shift adjustment \citep{hainmueller2012entropy}.
Finds weights $w_i$ that minimize $\sum_i w_i \log(w_i)$ subject to moment constraints:
\[
\sum_{i \in \text{source}} w_i X_i = \bar{X}_{\text{target}}.
\]
Then applies weighted outcome regression.

\paragraph{Proposed Methods (glmtrans-based).}

Our proposed methods use the \texttt{glmtrans} R package \citep{tian2023transfer} for
$\ell_1$-penalized transfer learning with automatic source detection.

\textit{Proposed.}
Two-step transfer learning:
\begin{enumerate}
    \item \textbf{Source detection}: For each source $k$, test $H_0: \beta^{(k)} = \beta^{(0)}$
          using truncated $\ell_1$ estimator. Select transferable sources $\mathcal{A}$.
    \item \textbf{Transfer estimation}: Solve
    \[
    \hat{\beta}_a = \argmin_\beta \frac{1}{2m}\|Y_a - X_a\beta\|_2^2
                  + \lambda_1 \|\beta - \bar{\beta}_{\mathcal{A}}\|_1 + \lambda_2 \|\beta\|_1,
    \]
    where $\bar{\beta}_{\mathcal{A}}$ is the pooled source estimate from selected sources.
\end{enumerate}
CATE is computed as plug-in: $\hat{\tau}(x) = x^\top(\hat{\beta}_1 - \hat{\beta}_0)$.

\textit{Proposed-CF.}
Combines glmtrans with cross-fitted DR pseudo-outcomes:
\begin{enumerate}
    \item Split target data into $K=2$ folds
    \item For each fold $k$: fit $\hat{\mu}_0^{(-k)}, \hat{\mu}_1^{(-k)}$ on remaining folds
          (propensity $e(X)$ is known by randomization design, not estimated)
    \item Compute DR pseudo-outcomes $\tilde{\tau}_i$ for fold $k$ samples
    \item Run glmtrans on pseudo-outcomes
\end{enumerate}

\textit{Proposed-B.}
For disconnected targets ($m_1 = 0$):
\begin{enumerate}
    \item Use target placebo outcomes to run glmtrans source detection on the control arm,
          identifying transferable sources $\mathcal{A}$
    \item Fit source-side DR CATE using only selected source data
    \item Transport the source CATE estimate to the target covariate distribution
          by averaging over target placebo covariates
\end{enumerate}

\subsection{Hyperparameters and Tuning}\label{sec:app_hyperparams}

\paragraph{Regularization.}
\begin{itemize}
    \item LASSO/Ridge: 5-fold cross-validation with \texttt{LassoCV}/\texttt{RidgeCV}
    \item glmtrans: $\lambda$ selected by 5-fold cross-validation minimizing MSE
    \item Random Forest (proxy outcome models): 100 trees, max depth 8, min samples leaf 5
    \item Random Forest (CATE meta-learner): 100 trees, max depth 5, min samples leaf 5
\end{itemize}

\paragraph{Cross-fitting.}
Proposed-CF uses $K=2$ stratified folds (stratified by treatment arm).
TargetOnly and AnchorOnly use $K=5$ stratified folds.

\paragraph{Propensity Scores.}
For the proposed methods, propensity scores $e(X)$ are the known randomization probabilities
and are not estimated. Transport baselines that require propensity estimation clip scores to
$[\epsilon, 1-\epsilon]$ with $\epsilon = 0.01$ to avoid extreme weights.

\paragraph{Reproducibility.}
All experiments use fixed random seeds. Each scenario runs $R=100$ Monte Carlo replicates.
Code is available at \url{https://github.com/wangzilongri/SparseTLDRICHI2026}.

\clearpage

\captionsetup{type=table,position=above,font=normalsize}
\setcounter{section}{3}
\noindent\textbf{\large\thesection.\ Supplementary Tables}\label{sec:appendix_tables}\par\vspace{0.5em}
The following tables report full results with error bars (mean $\pm$ SD over 100 MC replicates)
for all metrics across all experiments. Method abbreviations: Target=TargetOnly, Proxy=ProxyOnly,
IPW=IPW-Transport, OM=OM-Transport, EB=EntropyBal, Anch=AnchorOnly,
Prop=Proposed, Prop-CF=Proposed-CF, Prop-B=Proposed-B.

\subsection{Target Budget vs.\ Dimensionality}\label{sec:app_tables_dim}
\begin{table}[H]
\centering
\caption{PEHE (mean$\pm$SD) across budget and $p$. Lower is better.}
\label{tab:dim_sweep_pehe_full}
\scriptsize
\setlength{\tabcolsep}{2pt}

\end{table}
\begin{table}[H]
\centering
\caption{ATE (mean$\pm$SD) across budget and $p$. Lower is better.}
\label{tab:dim_sweep_ate_full}
\scriptsize
\setlength{\tabcolsep}{2pt}
%
\end{table}
\begin{table}[H]
\centering
\caption{Spearman (mean$\pm$SD) across budget and $p$. Higher is better.}
\label{tab:dim_sweep_spearman_full}
\scriptsize
\setlength{\tabcolsep}{2pt}
%
\end{table}
\begin{table}[H]
\centering
\caption{Regret (mean$\pm$SD) across budget and $p$. Lower is better.}
\label{tab:dim_sweep_regret_full}
\scriptsize
\setlength{\tabcolsep}{2pt}
%
\end{table}
\FloatBarrier

\subsection{Source Site Scaling}\label{sec:app_tables_sources}
\begin{table}[H]
\centering
\caption{PEHE (mean$\pm$SD) across sources $C$ and budget. Lower is better.}
\label{tab:sources_sweep_pehe_full}
\scriptsize
\setlength{\tabcolsep}{2pt}
%
\end{table}
\begin{table}[H]
\centering
\caption{ATE (mean$\pm$SD) across sources $C$ and budget. Lower is better.}
\label{tab:sources_sweep_ate_full}
\scriptsize
\setlength{\tabcolsep}{2pt}
%
\end{table}
\begin{table}[H]
\centering
\caption{Spearman (mean$\pm$SD) across sources $C$ and budget. Higher is better.}
\label{tab:sources_sweep_spearman_full}
\scriptsize
\setlength{\tabcolsep}{2pt}
%
\end{table}
\begin{table}[H]
\centering
\caption{Regret (mean$\pm$SD) across sources $C$ and budget. Lower is better.}
\label{tab:sources_sweep_regret_full}
\scriptsize
\setlength{\tabcolsep}{2pt}
%
\end{table}
\FloatBarrier

\subsection{Sensitivity to A5 Violations}\label{sec:app_tables_a5}
\begin{table}[H]
\centering
\caption{PEHE (mean$\pm$SD) under A5 violations. Lower is better.}
\label{tab:a5_violation_pehe_full}
\resizebox{\columnwidth}{!}{%
\scriptsize
%
%
}
\end{table}
\begin{table}[H]
\centering
\caption{ATE (mean$\pm$SD) under A5 violations. Lower is better.}
\label{tab:a5_violation_ate_full}
\resizebox{\columnwidth}{!}{%
\scriptsize
%
%
}
\end{table}
\begin{table}[H]
\centering
\caption{Spearman (mean$\pm$SD) under A5 violations. Higher is better.}
\label{tab:a5_violation_spearman_full}
\resizebox{\columnwidth}{!}{%
\scriptsize
%
%
}
\end{table}
\begin{table}[H]
\centering
\caption{Regret (mean$\pm$SD) under A5 violations. Lower is better.}
\label{tab:a5_violation_regret_full}
\resizebox{\columnwidth}{!}{%
\scriptsize
%
%
}
\end{table}
\FloatBarrier

\subsection{CATE Calibration}\label{sec:app_tables_calib}
The following tables report CATE calibration metrics: Slope (ideal value = 1.0), R$^2$ (higher is better), and Expected Calibration Error (ECE, lower is better). Best value in each scenario is bolded.

\begin{table}[H]
\centering
\caption{CATE Calibration: Dim Sweep at budget (150,100). Slope (ideal=1), R$^2$ (higher=better), ECE (lower=better).}
\label{tab:dim_calib_full}
\resizebox{\columnwidth}{!}{%
\scriptsize
\setlength{\tabcolsep}{2pt}
%
%
}
\end{table}
\vspace{1.5em}
\begin{table}[H]
\centering
\caption{CATE Calibration: Sources Sweep at budget (150,100). Slope (ideal=1), R$^2$ (higher=better), ECE (lower=better).}
\label{tab:sources_calib_full}
\resizebox{\columnwidth}{!}{%
\scriptsize
\setlength{\tabcolsep}{2pt}
%
%
}
\end{table}
\vspace{1.5em}
\begin{table}[H]
\centering
\caption{CATE Calibration under A5 Violations. Slope (ideal=1), R$^2$ (higher=better), ECE (lower=better).}
\label{tab:a5_calib_full}
\resizebox{\columnwidth}{!}{%
\scriptsize
\setlength{\tabcolsep}{2pt}
%
%
}
\end{table}
\FloatBarrier

\subsection{IHDP Semi-Synthetic (Full Metrics)}\label{sec:app_tables_ihdp}
Full metrics (PEHE, ATE error, Spearman, Regret, ECE) for the IHDP connected and disconnected regimes (mean $\pm$ SD over 50 realizations).

\begin{table}[H]
\centering
\caption{IHDP connected regime: full metrics (PEHE, ATE error, Spearman, Policy regret, ECE).}
\label{tab:ihdp_connected_appendix}
\adjustbox{max width=\columnwidth, max totalheight=0.95\textheight, keepaspectratio}{%
\scriptsize
\setlength{\tabcolsep}{3pt}
%
%
}
\end{table}
\FloatBarrier
\begin{table}[H]
\centering
\caption{IHDP disconnected regime ($m_1=0$): full metrics.}
\label{tab:ihdp_disconnected_appendix}
\adjustbox{max width=\columnwidth, max totalheight=0.95\textheight, keepaspectratio}{%
\scriptsize
\setlength{\tabcolsep}{3pt}
%
%
}
\end{table}

\end{document}